\def\eqref#1{equation~\ref{#1}}
\def\1{\bm{1}}
\DeclareMathAlphabet{\mathsfit}{\encodingdefault}{\sfdefault}{m}{sl}
\SetMathAlphabet{\mathsfit}{bold}{\encodingdefault}{\sfdefault}{bx}{n}
\DeclareMathOperator*{\argmin}{arg\,min}
\title{Selective Prediction via Training Dynamics}
\author{\name Stephan Rabanser \email stephan@cs.toronto.edu \\
University of Toronto \& Vector Institute
\AND
\name Anvith Thudi \email anvith.thudi@mail.utoronto.ca \\
University of Toronto \& Vector Institute
\AND
\name Kimia Hamidieh \email hamidieh@mit.edu \\
Massachusetts Institute of Technology
\AND
\name Adam Dziedzic \email adam.dziedzic@cispa.de \\
CISPA Helmholtz Center for Information Security
\AND
\name Israfil Bahceci \email israfil.bahceci@ericsson.com \\
Ericsson
\AND
\name Akram Bin Sediq \email akram.bin.sediq@ericsson.com \\
Ericsson
\AND
\name Hamza Sokun \email hamza.sokun@ericsson.com \\
Ericsson
\AND
\name Nicolas Papernot \email nicolas.papernot@utoronto.ca \\
University of Toronto \& Vector Institute
}
\newtheorem{lemma}{Lemma}
\newcommand{\shortened}[1]{\color{black} #1\xspace\color{black}}
\newcommand{\newlyadded}[1]{\color{black} #1\xspace\color{black}}
\newcommand{\fixed}[1]{\color{black} #1\xspace\color{black}}
\newcommand{\ba}[1]{\texttt{#1}}
\newcommand{\sptd}[0]{\ba{SPTD}\xspace}
\newcommand{\cclsc}[0]{\ba{CCL-SC}\xspace}
\newcommand{\aucoc}[0]{\ba{AUCOC}\xspace}
\newcommand{\sptdde}[0]{\ba{DE+SPTD}\xspace}
\newcommand{\logitvar}[0]{\ba{LOGITVAR}\xspace}
\newcommand{\sat}[0]{\ba{SAT}\xspace}
\newcommand{\satersr}[0]{\ba{SAT+ER+SR}\xspace}
\newcommand{\sr}[0]{\ba{SR}\xspace}
\newcommand{\sn}[0]{\ba{SN}\xspace}
\newcommand{\odist}[0]{\ba{ODIST}\xspace}
\newcommand{\de}[0]{\ba{DE}\xspace}
\newcommand{\smax}[0]{$s_\text{MAX}$\xspace}
\newcommand{\ssum}[0]{$s_\text{SUM}$\xspace}
\newcommand{\sjmp}[0]{$s_\text{jmp}$\xspace}
\newcommand{\svar}[0]{$s_\text{var}$\xspace}
\newcommand{\fps}[0]{false-positives\xspace}
\newcommand{\ie}[0]{i.e.,\xspace}
\newcommand{\eg}[0]{e.g.,\xspace}
\newcommand{\algcolor}[2]{%
  \hskip-\ALG@thistlm\colorbox{#1}{\parbox{\dimexpr\linewidth-2\fboxsep}{\hskip\ALG@thistlm\relax #2}}%
}
\newcommand{\algemph}[1]{\algcolor{yellow!20!white}{#1}}
\newcommand{\removelatexerror}{\let\@latex@error\@gobble}
\begin{document}

\maketitle

\begin{abstract}
Selective Prediction is the task of rejecting inputs a model would predict incorrectly on. This involves a trade-off between input space coverage (how many data points are accepted) and model utility (how good is the performance on accepted data points). Current methods for selective prediction typically impose constraints on either the model architecture or the optimization objective; this inhibits their usage in practice and introduces unknown interactions with pre-existing loss functions. In contrast to prior work, we show that state-of-the-art selective prediction performance can be attained solely from studying the (discretized) training dynamics of a model. We propose a general framework that, given a test input, monitors metrics capturing the instability of predictions from intermediate models (\ie checkpoints) obtained during training w.r.t. the final model's prediction. In particular, we reject data points exhibiting too much disagreement with the final prediction at late stages in training. The proposed rejection mechanism is domain-agnostic (\ie it works for both discrete and real-valued prediction) and can be flexibly combined with existing selective prediction approaches as it does not require any train-time modifications. Our experimental evaluation on image classification, regression, and time series problems shows that our method beats past state-of-the-art accuracy/utility trade-offs on typical selective prediction benchmarks.

\end{abstract}

\section{Introduction}

Machine learning (ML) is increasingly deployed in high-stakes decision-making environments with strong reliability and safety requirements. One of these requirements is the detection of inputs for which the ML model produces an erroneous prediction. This is particularly important when deploying deep neural networks (DNNs) for applications with low tolerances for \fps (\ie classifying with a wrong label), such as healthcare~\citep{challen2019artificial, mozannar2020consistent}, self-driving~\citep{ghodsi2021generating}, and law~\citep{vieira2021understanding}.
This problem setup is captured by the Selective Prediction (SP) framework, which introduces an accept/reject function (a so-called \emph{gating mechanism}) to abstain from predicting on individual test points in the presence of high prediction uncertainty~\citep{geifman2017selective}. 
Specifically, SP aims to (i) only accept inputs on which the ML model would achieve high utility, while (ii) maintaining high coverage (\ie correctly accepting as many inputs as possible).

Current selective prediction techniques 
take one of two directions: (i) augmentation of the architecture of the underlying ML model~\citep{geifman2019selectivenet}; or (ii) training the model using a purposefully adapted loss function~\citep{liu2019deep, huang2020self, gangrade2021selective}. 
The unifying principle behind these methods is to modify the training stage in order to accommodate selective prediction. While many ad-hoc experimentation setups are amenable to these changes, productionalized environments often impose data pipeline constraints which limit the applicability of existing methods. Such constraints include, but are not limited to, data access revocation, high (re)-training costs, or pre-existing architecture/loss modifications whose interplay with selective prediction adaptations are unexplored. As a result of theses limitations, selective prediction approaches are hard to deploy in production environments.

We instead show that \textit{ these modifications are unnecessary}. That is, our method, which \textbf{establishes new SOTA results for selective prediction} across a variety of datasets, not only outperforms existing work but \textbf{our method can be easily applied on top of all existing models}, unlike past methods. Moreover, our method is not restricted to classification problems but can be applied for real-valued prediction problems, too, like regression and time series prediction tasks. This is an important contribution as recent SP approaches have solely focused on improving selective \emph{classification}.

\begin{figure*}
    \centering
    \includegraphics[width=0.97\linewidth]{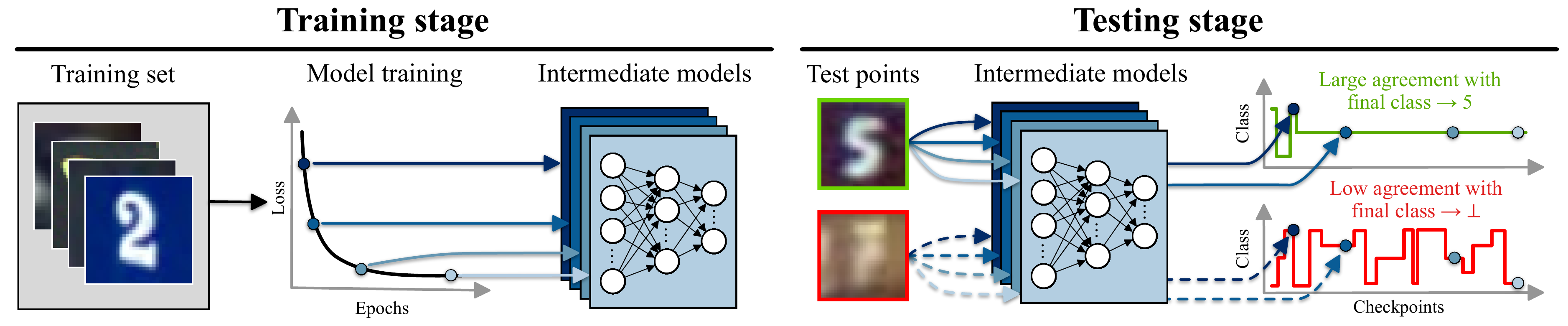}
    \caption{\textbf{Our proposed \sptd method for a classification example}. We store checkpoints of intermediate models during model training. At inference time, given a test input, we compute various metrics capturing the stability of intermediate predictions with respect to the final model prediction. Data points with high stability are accepted, data points with low stability are rejected.}
    \label{fig:overview}
\end{figure*}

Our approach builds on the following observation: typical DNNs are trained using an iterative optimization procedure, \eg using Stochastic Gradient Descent (SGD). Due to the sequential nature of this optimization process, as training goes on, the optimization process yields a sequence of intermediate models. \newlyadded{Current selective prediction methods rely only on the final model, ignoring valuable statistics available from the model’s training sequence.}
In this work, however, we propose to take advantage of the information contained in these optimization trajectories for the purpose of selective prediction. By studying the usefulness of these trajectories, we observe that instability in SGD convergence is often indicative of high aleatoric uncertainty (\ie irreducible data noise such as overlap between distinct data classes).
Furthermore, past work on example difficulty~\citep{jiang2020characterizing,toneva2018empirical,hooker2019compressed,agarwal2020estimating} has highlighted faster convergence as indicative of easy-to-learn training examples (and conversely slow convergence of hard-to-learn training examples). We hypothesize that such training time correlations with uncertainty also hold for test points and studying how test time predictions evolve over the intermediate checkpoints is useful for reliable uncertainty quantification.

With this hypothesis, we derive the first framework for \textbf{S}elective \textbf{P}rediction based on neural network \textbf{T}raining \textbf{D}ynamics~(\sptd, see Figure~\ref{fig:overview} for an example using a classification setup). Through a formalization of this particular neural network training dynamics problem, we first note that a useful property of the intermediate models' predictions for a test point is whether they converge ahead of the final prediction. This convergence can be measured by deriving a prediction instability score measuring how strongly predictions of intermediate models agree with the final model. While the exact specifics of how we measure instability differs between domains (classification vs regression), our resulting score generalizes across application domains and measures weighted prediction instability. This weighting allows us to emphasize instability late in training which we deem indicative of points that should be rejected. Note that this approach is transparent w.r.t. the training stage: our method only requires that intermediate checkpoints were recorded when a model was trained, which is an established practice (especially when operating in shared computing environments such as GPU clusters). Moreover, when compared to competing ensembling-based methods, such as Deep Ensembles~\citep{balaji2017uncertainty}, our approach can match the same inference-time cost while being significantly cheaper to train.

To summarize, our main contributions are as follows: 
\begin{enumerate}
    \item We present a motivating synthetic example using a linear model, showcasing the effectiveness of training dynamics information in the presence of a challenging classification task (Section~\ref{sec:method_intuition}). 
    \item We propose a novel method for selective prediction based on training dynamics (\sptd, Section~\ref{sec:method_overview}). To that end, we devise an effective scoring mechanism capturing weighted prediction instability of intermediate models with the final prediction for individual test points. Our methods allow for selective classification, selective regression, and selective time series prediction. Moreover, \sptd can be applied to all existing models whose checkpoints were recorded during training.
    \item We highlight an in-depth connection between our \sptd approach and forging (\cite{thudi2022necessity}, Section~\ref{sec:forging}), which has shown that optimizing a model on distinct datasets can lead to the same sequence of checkpoints. This connection demonstrates that our metrics can be motivated from a variety of different perspectives.
    \item We perform a comprehensive set of empirical experiments on established selective prediction benchmarks spanning over classification, regression, and time series prediction problems (Section~\ref{sec:emp_eval}). 
    Our results obtained from all instances of \sptd demonstrate highly favorable utility/coverage trade-offs, establishing new state-of-the-art results in the field at a fraction of the training time cost of competitive prior approaches. 
\end{enumerate}

\section{Background on Selective Prediction}
\label{sec:background}

\paragraph{Supervised Learning Setup.} Our work considers the standard supervised learning setup. We assume access to a dataset $D = \{(\bm{x}_i,y_i)\}_{i=1}^{M}$ consisting of $M$ data points $(\bm{x},y)$ with $\bm{x} \in \mathcal{X}$ and  $y \in \mathcal{Y}$. We refer to $\mathcal{X} := \mathbb{R}^d$ as the covariate space (or input/data space) of dimensionality $d$. For classification problems, we define $\mathcal{Y} := [C] = \{1, 2, \ldots, C\}$ as the label space consisting of $C$ classes. For regression and time series problems (such as demand forecasting) we instead define $\mathcal{Y} := \mathbb{R}$ and $\mathcal{Y} := \mathbb{R}^R$ respectively (with $R$ being the prediction horizon). All data points $(\bm{x},y)$ are sampled independently from the underlying distribution $p$ defined over the joint covariate and label spaces $\mathcal{X} \times \mathcal{Y}$. Our goal is to learn a prediction function $f : \mathcal{X} \rightarrow \mathcal{Y}$ which minimizes the risk $\mathcal{R}(f_{\bm{\theta}})$ with respect to the underlying data distribution $p$ and an appropriately chosen loss function $\ell : \mathcal{Y} \times \mathcal{Y} \rightarrow \mathbb{R}$. 
We can derive the optimal parameters $\hat{\bm{\theta}}$ via empirical risk minimization which approximates the true risk $\mathcal{R}(f_{\bm{\theta}})$ through sampling, thereby ensuring that $\bm{\theta}^* \approx \hat{\bm{\theta}}$ for a sufficiently large amount of samples:
\begin{align}
    \bm{\theta}^* & = \argmin_{\bm{\theta}} \mathcal{R}(f_{\bm{\theta}}) = \argmin_{\bm{\theta}} \mathbb{E}_{p(\bm{x},y)}[\ell(f_{\bm{\theta}}(\bm{x}),y)] \\ 
    \hat{\bm{\theta}} & = \argmin_{\bm{\theta}} \hat{\mathcal{R}}(f_{\bm{\theta}}) = \argmin_{\bm{\theta}} \frac{1}{M} \sum_{i=1}^{N} \ell(f_{\bm{\theta}}(\bm{x}_i),y_i)
\end{align}
In the following, we drop the explicit dependence of $f$ on $\bm{\theta}$ and simply denote the predictive function by $f$.

\paragraph{Selective Prediction Setup.} Selective prediction alters the standard supervised learning setup by introducing a rejection state~$\bot$ through a \textit{gating mechanism}~\citep{yaniv2010riskcoveragecurve}. In particular, such a mechanism introduces a selection function $g:\mathcal{X} \rightarrow \mathbb{R}$ which determines if a model should predict on a data point~$\bm{x}$. 
Given an acceptance threshold $\tau$, the resulting predictive model can be summarized as:
\begin{equation}
    (f,g)(\bm{x}) = \begin{cases}
  f(\bm{x})  & g(\bm{x}) \leq \tau \\
  \bot & \text{otherwise.}
\end{cases}
\end{equation}

\paragraph{Selective Prediction Evaluation Metrics.} Prior work evaluates the performance of a selective predictor $(f,g)$ based on two metrics: the \emph{coverage} of $(f,g)$ (\ie what fraction of points we predict on) and the \emph{selective utility} of $(f,g)$ on the accepted points. Note that the exact utility metric depends on the type of the underlying selective prediction task (e.g. accuracy for classification, $R^2$ for regression, and a quantile-based loss for time series forecasting). Successful SP methods aim to obtain both strong selective utility and high coverage. Note that these two metrics are at odds with each other: na\"ively improving utility leads to lower coverage and vice-versa. The complete performance profile of a model can be specified using the risk–coverage curve, which defines the risk as a function of coverage~\citep{yaniv2010riskcoveragecurve}. These metrics can be formally defined as follows: 
\begin{align}
    \text{coverage}(f,g) & = \frac{M_\tau}{M} \\
    \text{utility}(f,g) & = \sum_{ \{(\bm{x}, y) : g(\bm{x}) \leq \tau \} } u(f(\bm{x}), y)
\end{align}
Here, $u(\cdot, \cdot)$ corresponds to the specifically used utility function, $M_\tau = \sum_{i=1}^M \mathds{1}[\bm{x}_i : g(\bm{x}_i) \leq \tau]$ corresponds to the number of accepted data points at threshold $\tau$, and $\mathds{1}[\cdot]$ corresponds to the indicator function. We define the following utility functions to be used based on the problem domain:

\begin{enumerate}
    \item \textit{Classification}: We use accuracy on accepted points as our utility function for classification:
    \begin{equation}
        \text{Accuracy} = \frac{1}{M_\tau}\sum_{i=1}^{M_\tau} \mathds{1}[\bm{x}_i : f(\bm{x}_i) = y_i]
    \end{equation}
    \item \textit{Regression}: We use the coefficient of determination ($R^2$ score, which is a scaled version of the mean squared error) on accepted points as our utility function for regression:
    \begin{equation}
        R^2= 1 - \frac{\sum_{i=1}^{M_\tau} (f(\bm{x}_i) - y_i)^2}{\sum_{i=1}^{M_\tau} (y_i - \frac{1}{M_\tau}\sum_{j=1}^{M_\tau} y_j)^2}
    \end{equation}
    \item \textit{Time Series Forecasting}: We use the Mean Scaled Interval Score (MSIS)~\cite{gneiting2007strictly} on accepted series as our utility function for time series forecasting
    \begin{equation}
    \fontsize{7.75pt}{7.75pt}
    \text { MSIS }= \frac{1}{M_\tau R}\sum_{i=1}^{M_\tau} \frac{ \splitfrac{ \sum_{r=n+1}^{n+R}\left(u_{i,r}-l_{i,r}\right) +\frac{2}{\alpha}\left(l_{i,r}-y_{i,r}\right) \mathds{1}[y_{i,r}<l_{i,r}]}{ +\frac{2}{\alpha}\left(y_{i,r}-u_{i,r}\right)  \mathds{1}[y_{i,r}>u_{i,r}] } }{\frac{1}{n-m} \sum_{r=m+1}^n\left|y_{i,r}-y_{i,r-m}\right|}
    \end{equation}
    where $\alpha$ refers to a specific predictive quantile, $n$ to the conditioning length of the time series, $m$ to the length of the seasonal period, and $u_{i,r}$ and $l_{i,r}$ to the upper and lower bounds on the prediction range, respectively.
\end{enumerate}

\subsection{Past \& Related Work} 

\paragraph{Softmax Response Baseline (classification).} The first work on selective classification was the softmax response (\sr) mechanism~\citep{hendrycks2016baseline, geifman2017selective}. A classification model typically has a softmax output layer which takes in unnormalized activations in $z_{i} \in \mathbb{R}^C$ (referred to as logits) from a linear model or a deep neural net. These activations are mapped through the softmax function which normalizes all entries 
\begin{equation}
    \sigma(\bm{z})_{i}=\frac{e^{z_{i}}}{\sum_{j=1}^{K} e^{z_{j}}}
\end{equation}
to the interval $[0,1]$ and further ensures that $\sum_{i=1}^{C} \sigma(\bm{z})_{i} = 1$. As a result, the softmax output can be interpreted as a conditional probability distribution which we denote $f(y|\bm{x})$.
The softmax response mechanism applies a threshold $\tau$ to the maximum response of the softmax layer: $\max_{y \in \mathcal{Y}}f(y|\bm{x})$. 
Given a confidence parameter $\delta$ and desired risk $\hat{\mathcal{R}}(f)$, \sr constructs $(f, g)$ with test error no larger than $\hat{\mathcal{R}}(f)$ with probability $\geq 1-\delta$. 
While this approach is simple to implement, it has been shown to produce over-confident results due to poor calibration of deep neural networks~\citep{guo2017calibration}.\footnote{Under miscalibration, a model's prediction frequency of events does not match the true observed frequency of events.} 

\paragraph{Loss Modifications (mostly classification).} 
The first work to deliberately address selective classification via architecture modification was SelectiveNet~\citep{geifman2019selectivenet}, which trains a model to jointly optimize for classification and rejection. A loss penalty is added to enforce a particular coverage constraint using a variant of the interior point method~\cite{potra2000interior} which is often used for solving linear and non-linear convex optimization problems. To optimize selective accuracy over the full coverage spectrum in a single training run, Deep Gamblers~\citep{liu2019deep} transform the original $C$-class problem into a $(C + 1)$-class problem where the additional class represents model abstention. \shortened{A similar approach is given by Self-Adaptive Training (\sat)~\citep{huang2020self} which also uses a $(C+1)$-class setup but instead incorporates an exponential average of intermediate predictions into the loss function.} Other similar approaches include: performing statistical inference for the marginal prediction-set coverage rates using model ensembles~\citep{feng2021selective}, confidence prediction using an earlier snapshot of the model~\citep{geifman2018bias}, estimating the gap between classification regions corresponding to each class~\citep{gangrade2021selective}, and complete precision by classifying only when models consistent with the training data predict the same output~\citep{khani2016unanimous}.

\paragraph{Uncertainty Quantification (classification + regression).} 
It was further shown by~\citep{balaji2017uncertainty, zaoui2020regression} that deep model ensembles (\ie a collection of multiple models trained with different hyper-parameters until convergence) can provide state-of-the-art uncertainty quantification, a task closely related to selective prediction. This however raises the need to train multiple models from scratch.
\shortened{To reduce the cost of training multiple models, \citep{gal2016dropout} proposed abstention based on the variance statistics from several dropout \cite{srivastava2014dropout} enabled forward passes at test time.}
Another popular technique for uncertainty quantification, especially for regression and time series forecasting, is given by directly modeling the output distribution~\citep{alexandrov2019gluonts} in a parametric fashion. Training with a parametric output distribution however can lead to additional training instability, often requiring extensive hyper-parameter tuning and distributional assumptions. On the other hand, our approach does not require any architecture or other training-time modifications. \newlyadded{Finally, we note that selective prediction and uncertainty are also strongly related to the field of automated model evaluation which relies on the construction a proximal prediction pipeline of the testing performance without the presence of ground-truth labels~\citep{peng2023came,peng2024energy}.}

\paragraph{Training Dynamics Approaches (classification).} Checkpoint and snapshot ensembles \citep{huang2017snapshot, chen2017checkpoint} constitute the first usage of training dynamics to boost model utility. Our work is closest in spirit to recent work on dataset cartography~\citep{swayamdipta2020dataset} which relies on using training dynamics from an example difficulty viewpoint by considering the variance of logits. However, their approach does not consider selective prediction and further requires access to true label information (which is not available in the selective prediction setting). Recent work on out-of-distribution detection~\citep{adila2022understanding}, a closely related yet distinct application scenario from selective prediction, harness similar training dynamics based signals.

\paragraph{Example Difficulty.}
\label{sec:example_diff}

A related line of work to selective prediction is identifying \emph{difficult} examples, or how well a model can generalize to a given unseen example. Recent work~\cite{jiang2020characterizing} has demonstrated that the probability of predicting the ground truth label with models trained on data sub-samples of different sizes can be estimated via a per-instance empirical consistency score. Unlike our approach, however, this requires training a large number of models. 
Example difficulty can also be quantified through the lens of a forgetting event ~\cite{toneva2018empirical} in which the example is misclassified after being correctly classified. Instead, the metrics that we introduce in Section~\ref{sec:method}, are based on the disagreement of the label at each checkpoint with the final predicted label. Other approaches estimate the example difficulty by: 
prediction depth of the first layer at which a $k$-NN classifier correctly classifies an example~\citep{baldock2021deep}, the impact of quantization and compression on model predictions of a given sample~\citep{hooker2019compressed}, and estimating the leave-one-out influence of each training example on the accuracy of an algorithm by using influence functions~\citep{feldman2020neural}. 
Closest to our method, the work of \cite{agarwal2020estimating} utilizes gradients of intermediate models during training to rank examples by  difficulty. In particular, they average pixel-wise variance of gradients for each given input image. 
Notably, this approach is more costly and less practical than our approach and also does not study the utility/coverage trade-off which is of paramount importance to selective prediction.

\newlyadded{
\paragraph{Disagreement} Our \sptd method heavily relies on the presence of disagreements between intermediate models. Past work on (dis-)agreement has studied the connection between generalization and disagreement of full SGD runs \citep{jiang2021assessing} as well as correlations between in-distribution and out-of-distribution agreement across models \citep{baek2022agreement}.
}
\section{Selective Prediction via Neural Network Training Dynamics}
\label{sec:method}

We now introduce our selective prediction algorithms based on neural network training dynamics. We start by presenting a motivating example showcasing the effectiveness of analyzing training trajectories for a linear classification problem. Following this, we formalize our selective prediction scoring rule based on training-time prediction disagreements. We refer to the class of methods we propose as \sptd.

\subsection{Method Intuition: Prediction Disagreements Generalize Softmax Response}
\label{sec:method_intuition}

\begin{figure*}
    \centering
    \includegraphics[width=0.97\linewidth]{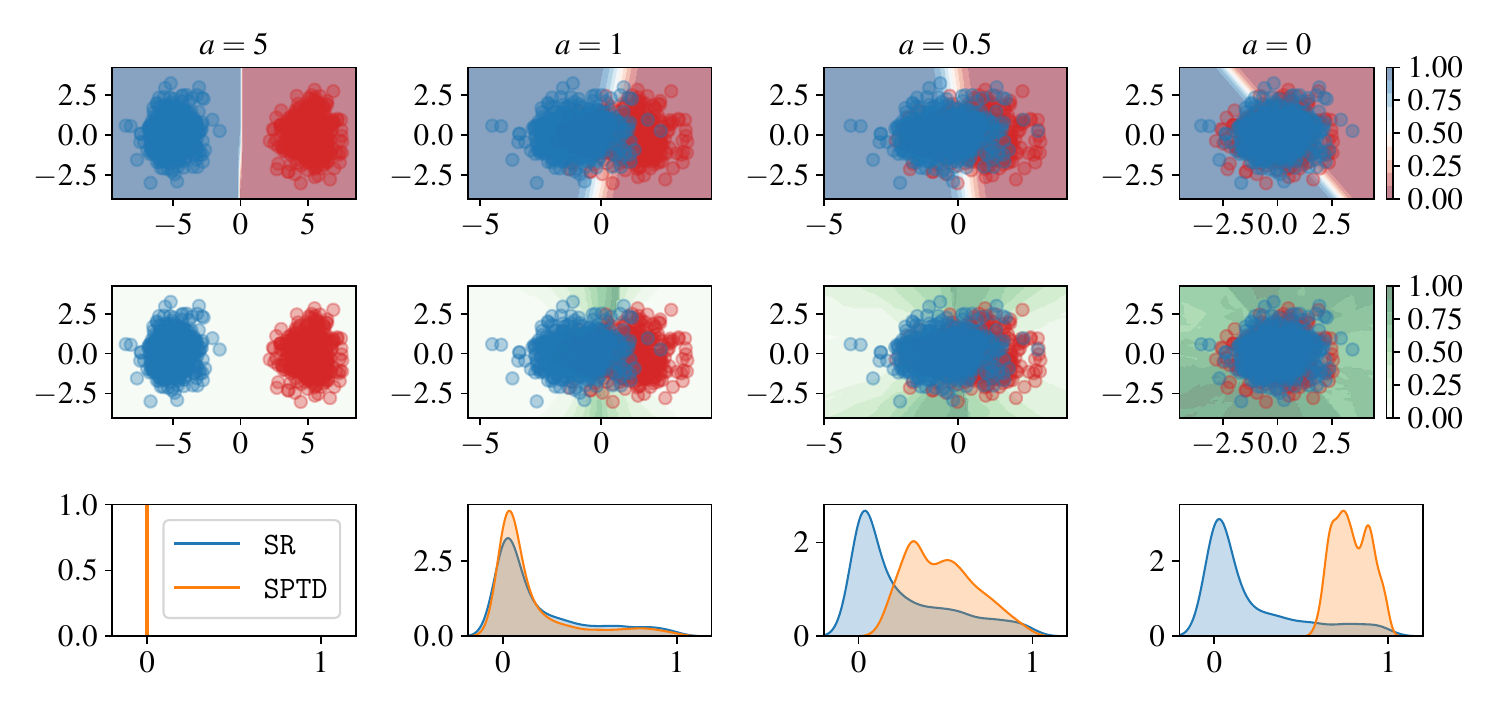}
    \caption{\textbf{Synthetic example of anomaly scoring based on \sr vs \sptd}. The first row shows a test set from the generative Gaussian model as well as the learned decision boundary separating the two Gaussians. For small $a$, the decision boundary is overconfident. The second row shows the same data set but instead depicts the scores yielded by applying \sptd to the full domain. \sptd highlights rightful rejection regions more clearly than the overconfident \sr score: larger regions are flagged as exhibiting noisy training dynamics (with stronger shades of green indicating stronger disagreement) as $a \rightarrow 0$. The bottom row shows the distribution of the \sr and \sptd scores, clearly showing that \sptd leads to improved uncertainty under stronger label ambiguity.}
    \label{fig:gauss}
\end{figure*}

Stochastic iterative optimization procedures, such as Stochastic Gradient Descent (SGD), yield a sequence of models that is iteratively derived by minimizing a loss function $\ell(\cdot,\cdot)$ on a randomly selected mini-batch $(\bm{X}_i, \bm{y}_i)$ from the training set. The iterative update rule can be expressed as follows 
\begin{equation}
    \bm{\theta}_{t+1} = \bm{\theta}_{t} - \nu \frac{\partial \ell(f(\bm{X}_i), \bm{y}_i)}{\partial \bm{\theta}_{t}}
\end{equation}
where the learning rate $\nu$ controls the speed of optimization and $t \in \{1,\ldots,T\}$ represents a particular time-step during the optimization process.

Current methods for selective prediction disregard the properties of this iterative process and only rely on the final set of parameters $\bm{\theta}_{T}$. However, the optimization trajectories contain information that we can use to determine prediction reliability. In particular, on hard optimization tasks, the presence of stochasticity from SGD and the potential ambiguity of the data often leads to noisy optimization behavior. As a result, intermediate predictions produced over the course of training might widely disagree in what the right prediction would be for a given data point. Our class of selective prediction approaches explicitly make use of these training dynamics by formalizing rejection scores based on the observed frequency of prediction disagreements with the final model throughout training.

To illustrate and reinforce this intuition that training dynamics contain meaningfully more useful information for selective prediction than the final model, we present a synthetic logistic regression example. First, we generate a mixture of two Gaussians each consisting of $1000$ samples: $D = \{(\bm{x}_i,0)\}_{i=1}^{1000} \cup \{(\bm{x}_j,1)\}_{j=1}^{1000}$ where $\bm{x}_i \sim \mathcal{N}(\begin{bmatrix}a & 0\end{bmatrix}^\top,\ \bm{I})$ and $\bm{x}_j \sim \mathcal{N}(\begin{bmatrix}-a & 0\end{bmatrix}^\top,\ \bm{I})$. Note that $a$ controls the distance between the two $2$-dimensional Gaussian clusters, allowing us to specify the difficulty of the learning task. Then, we train a linear classification model using SGD for $1000$ epochs for each $a \in \{0,0.5,1,5\}$. Finally, we compute both the softmax response score (\sr) score, the typical baseline for selective classification, as well as our \sptd score (details in Section~\ref{sec:method_overview}).

We showcase the results from this experiment in Figure~\ref{fig:gauss}. We  see that if the data is linearly separable ($a=5$) the learned decision boundary is optimal and the classifier's built-in confidence score \sr reflects well-calibrated uncertainty. Moreover, the optimization process is stable as \sptd yields low scores over the full domain. However, as we move the two Gaussians closer together (\ie by reducing $a$) we see that the \sr baseline increasingly suffers from overconfidence: large parts of the domain are very confidently classified as either $0$ (red) or $1$ (blue) with only a small ambiguous region (white). However, the optimization trajectory is highly unstable with the decision boundary changing abruptly between successive optimization steps. \sptd identifies the region of datapoints exhibiting large prediction disagreement due to this training instability and correctly rejects them (as those are regions also subject to label ambiguity in this case). In summary, we observe that \sptd provides improved uncertainty quantification in ambiguous classification regions (which induce training instability) and reduces to the \sr solution as the classification task becomes easier. Hence, we expect \sptd to generalize \sr performance, which is supported by this logistic regression experiment. 

\subsection{Method Overview: Measuring Prediction Instability During Training}
\label{sec:method_overview}

\begin{figure*}
\begingroup
\removelatexerror
    \begin{minipage}{0.48\textwidth}

    \begin{algorithm}[H]
    	\caption{\sptd for classification}\label{alg:sptd_class}
    	\begin{algorithmic}[1]
    	\REQUIRE Intermediate models $[f_1,\ldots,f_T]$, query point $\bm{x}$, weighting parameter $k \in [0,\infty)$.
        \FOR{$t \in [T]$}
            \STATE \algemph{\algorithmicif\ $f_t(\bm{x}) = f_T(\bm{x})$ \algorithmicthen\ $a_t \gets 0$ \algorithmicelse\ $a_t \gets 1$}
            \STATE $v_t \gets (\frac{t}{T})^k$
        \ENDFOR
    \STATE $g \gets \sum_{t} v_t a_t$
    \STATE \algorithmicif\ $g \leq \tau$ \algorithmicthen\ \fixed{$f(\bm{x}) = f_T(\bm{x})$} \algorithmicelse\ $f(\bm{x}) = \bot$
    	\end{algorithmic}
    \end{algorithm}
    
    \end{minipage}
    \hfill
    \begin{minipage}{0.48\textwidth}

    \begin{algorithm}[H]
    	\caption{\sptd for regression}\label{alg:sptd_regr}
    	\begin{algorithmic}[1]
    	\REQUIRE Intermediate models $[f_1,\ldots,f_T]$, query point $\bm{x}$, weighting parameter $k \in [0,\infty)$. 
        \FOR{$t \in [T]$}
            \STATE \algemph{$a_t \gets ||f_t (\bm{x}) - f_T (\bm{x})||$}
            \STATE $v_t \gets (\frac{t}{T})^k$
        \ENDFOR
    \STATE $g \gets \sum_{t} v_t a_t$
    \STATE \algorithmicif\ $g \leq \tau$ \algorithmicthen\ \fixed{$f(\bm{x}) = f_T(\bm{x})$} \algorithmicelse\ $f(\bm{x}) = \bot$
    	\end{algorithmic}
    \end{algorithm}
    
    \end{minipage}

    \vspace{-10pt}
\endgroup
\end{figure*}

We proceed to describe the statistics we collect from intermediate checkpoints which we later devise our scores for deciding which inputs to reject on. The objective of these statistics is to capture how unstable the prediction for a datapoint was over the training checkpoints. Let $[f_1,f_2,\ldots, f_T]$ be a sequence of intermediate checkpoints, and $\mathcal{D} = D_\text{train} \cup D_\text{test}$ be the set of all data points. We define a prediction disagreement score at time $t \in \{1,\ldots,T\}$ as some function $a_t: \mathcal{X} \rightarrow \mathbb{R}^+$ with $a_t(\bm{x}) = 0$ if $f_t(\bm{x}) = f_T(\bm{x})$. Note that the exact $a_t(\cdot)$ we use depends on the problem domain (classification vs regression) and we define our choices below. In the following, when conditioning on $\bm{x}$ is understood from context, we drop the explicit dependence on $\bm{x}$ and write $a_t$.

For a fixed data point $\bm{x}$, our approach takes a given sequence of prediction disagreements $[a_1,\ldots,a_T]$ and associates a weight $v_t$ to each disagreement $a_t$ to capture how severe a disagreement at step $t$ is. To derive this weighting we ask: How indicative of $\bm{x}$ being incorrectly classified is a disagreement at step $t$? Related work in the example difficulty literature (see Section~\ref{sec:example_diff} for details) found that easy-to-optimize samples are learned early in training and converge faster. While prior work specifically derived these convergence insights for training points only, the novelty of our method is to show such conclusions for training points also generalize to test points. Hence, we propose to use the weighting $v_t = (\frac{t}{T})^k$ for $k \in [0,\infty)$ to penalize late prediction disagreements as more indicative of a test point we will not predict correctly on. With this weighting, our methods compute a weighted sum of the prediction disagreements, which effectively forms our selection function $g(\cdot)$:
\begin{equation}
    \label{eq:score}
    g(\bm{x}) = \sum_{t} v_t a_t(\bm{x})    
\end{equation}

\paragraph{Instability for Classification.} For discrete prediction problems (\ie classification) we define the label disagreement score as $a_t = 1- \delta_{f_t(\bm{x}),f_T(\bm{x})}$ where $\delta$ is the Dirac-delta function: $a_t$ is hence $1$ if the intermediate prediction $f_t$ at checkpoint $t$ disagrees with the final prediction $f_T$ for $\bm{x}$, else $0$. The resulting algorithm using this definition of $a_t$ for classification is given in Algorithm~\ref{alg:sptd_class}. We remark that continuous metrics such as the maximum softmax score, the predictive entropy (\ie the entropy of the predictive distribution $f(y|\bm{x})$), or the gap between the two most confident classes could be used as alternate measures for monitoring stability (see Appendix~\ref{sec:alt_scores} for a discussion). However, these measures only provide a noisy proxy and observing a discrete deviation in the predicted class provides the most direct signal for potential mis-classification.

\paragraph{Instability for Regression.} One key advantage of our method over many previous ones is that it is applicable to \emph{any} predictive model, including regression. Here, we propose the following prediction disagreement score measuring the distance of intermediate predictions to the final model's prediction: $a_t = ||f_t (\bm{x}) - f_T (\bm{x})||$.\footnote{We explored a more robust normalization by averaging predictions computed over the last $l$ checkpoints: $a_t = ||f_t (\bm{x}) - \frac{1}{n}\sum_{c \in \{T-l, T-l+1, \ldots, T \}} f_c (\bm{x})||$. Across many $l$, we found the obtained results to be statistically indistinguishable from the results obtained by normalizing w.r.t. the last checkpoint $f_T$.} The resulting algorithm using this definition of $a_t$ for regression is given in Algorithm~\ref{alg:sptd_regr}. We again highlight the fact that Algorithm~\ref{alg:sptd_regr} only differs from Algorithm~\ref{alg:sptd_class} in the computation of the prediction disagreement $a_t$ (line 2 highlighted in both algorithms).

\begin{algorithm}[t]
    \caption{\sptd for time series forecasting}\label{alg:sptd_ts}
    \begin{algorithmic}[1]
    \REQUIRE Intermediate models $[f_1,\ldots,f_T]$, query point $\bm{x}$, weighting $k \in [0,\infty)$, prediction horizon $R$. 
    \FOR{$t \in [T]$}
        \FOR{$r \in [R]$}
            \STATE $a_{t,r} \gets ||f_{t}(\bm{x})_r - f_{T}(\bm{x})_r||$
        \ENDFOR
        \STATE $v_t \gets (\frac{t}{T})^k$
    \ENDFOR
\STATE $g \gets \sum_r\sum_{t} v_t a_{t,r}$
\STATE \algorithmicif\ $g \leq \tau$ \algorithmicthen\ $f(\bm{x}) = L$ \algorithmicelse\ $f(\bm{x}) = \bot$
    \end{algorithmic}
\end{algorithm}

\paragraph{Instability for Time Series Prediction.} We can further generalize the instability sequence used for regression to time series prediction problems by computing the regression score for all time points on the prediction horizon. In particular, we compute $a_{t,r} = ||f_{t}(\bm{x})_r - f_{T}(\bm{x})_r||$ for all $r \in \{1,\ldots,R\}$. Recall that for time series problems $f_{t}(\bm{x})$ returns a vector of predictions $y \in \mathbb{R}^R$ and we use the subscript $r$ on $f_{t}(\bm{x})_r$ to denote the vector indexing operation. Our selection function is then given by computing Equation~\ref{eq:score} for each $r$ and summing up the instabilities over the prediction horizon: $g(\bm{x}) = \sum_r\sum_{t} v_t a_{t,r}(\bm{x})$. The full algorithm therefore shares many conceptual similarities with Algorithm~\ref{alg:sptd_regr} and we provide the detailed algorithm as part of Algorithm~\ref{alg:sptd_ts}. Note that the presented generalization for time series is applicable to any setting in which the variability of predictions can be computed. As such, this formalism can extend to application scenarios beyond time series prediction such as object detection or segmentation.

\subsection{Selective Prediction and Forging}
\label{sec:forging}

While our \sptd method is primarily motivated from the example difficulty view point, we remark that the scores \sptd computes to decide which points to reject can be derived from multiple different perspectives. To showcase this, we provide a formal treatment on the connection between selective classification and forging~\citep{thudi2022necessity}, which ultimately leads to the same selection function $g(\cdot)$ as above.

Previous work has shown that running SGD on different datasets could lead to the same final model~\citep{hardt2016train,bassily2020stability,thudi2022necessity}. For example, this is intuitive when two datasets were sampled from the same distribution. We would then expect that training on either dataset should not significantly affect the model returned by SGD. For our selective prediction problem, this suggests an approach to decide which points the model is likely to predict correctly on: identify the datasets that it could have been trained on (in lieu of the training set it was actually trained on). Any point from the datasets the model could have trained on would then be likely to be predicted on correctly by the model. Recent work on forging \cite{thudi2022necessity} solves this problem of identifying datasets the model could have trained on by brute-force searching through different mini-batches to determine if a mini-batch in the alternative dataset can be used to reproduce one of the original training steps. Even then, this is only a sufficient condition to show a datapoint could have plausibly been used to train: if the brute-force fails, it does not mean the datapoint could not have been used to obtain the final model. As an alternative, we propose to instead characterize the optimization behaviour of training on a dataset as a probabilistic necessary condition, i.e, a condition most datapoints that were (plausibly) trained on would satisfy based on training dynamics. Our modified hypothesis is then that the set of datapoints we optimized for (which contains the forgeable points) coincides significantly with the set of points the model predicts correctly on.

\subsubsection{A Framework for Being Optimized}
\label{ssec:reject_cond}

In this section we derive an upper-bound on the probability that a datapoint could have been used to obtain the model's checkpointing sequence. This yields a probabilistically necessary (though not sufficient) characterization of the points we explicitly optimized for. This bound, and the variables it depends on, informs what we characterize as "optimizing" for a datapoint, and, hence, our selective classification methods.

Let us denote the set of all datapoints as $\mathcal{D}$, and let $D \subset \mathcal{D}$ be the training set. We are interested in the setting where a model $f$ is plausibly sequentially trained on $D$ (e.g., with stochastic gradient descent). We thus also have access to a sequence of $T$ intermediate states for $f$, which we denote $[f_1,\ldots, f_T]$. In this sequence, note that $f_T$ is exactly the final model $f$. 

Now, let $p_{t}$ represent the random variable for outputs on $D$ given by an intermediate model $f_t$ where the outputs have been binarized: we have $0$ if the output agrees with the final prediction and $1$ if not. In other words, $p_{t}$ is the distribution of labels given by first drawing $\bm{x} \sim D$ and then outputting $1- \delta_{f_t(\bm{x}),f_T(\bm{x})}$ where $\delta$ denotes the Dirac delta function. Note that we always have both a well-defined mean and variance for $p_{t}$ as it is bounded. Furthermore, we always have the variances and expectations of $\{p_{t}\}$ converge to $0$ with increasing $t$: as $p_{T} = 0$ always and the sequence is finite convergence trivially occurs. To state this formally, let $v_t = \mathbb{V}_{\bm{x} \sim D}[p_{t}]$ and let $e_t = \mathbb{E}_{\bm{x} \sim D}[p_{t}]$ denote the variances and expectations over points in $D$. In particular, we remark that $e_T=0,\ v_T = 0$, so both $e_t$ and $v_t$ converge. More formally, for all $ \epsilon > 0$ there exists an $N \in \{1,\ldots, T\}$ such that $v_t < \epsilon$ for all $t > N$. Similarly, for all $\epsilon > 0$ there exists a (possibly different) $N \in \{1,\ldots, T\}$ such that $e_t < \epsilon$ for all $t > N$.

However, the core problem is that we do not know how this convergence in the variance and expectation occurs. More specifically, if we knew the exact values of $e_t$ and $v_t$, we could use the following bound on the fraction of training data points producing a given $[a_1,\cdots,a_t]$ as a reject option for points that are not optimized for.  We consequently introduce the notation $[a_1,\ldots,a_T]$ where $a_t = 1- \delta_{f_t(\bm{x}),f_T(\bm{x})}$ which we call the "label disagreement (at $t$)". Note that the $a_t$ are defined with respect to a given input, while $p_t$ represent the distribution of $a_t$ over all inputs in $D$.

\begin{lemma}
\label{lem:prob_accept}
Given a datapoint $\bm{x}$,  let $\{a_1,\ldots,a_T\}$ where $a_t = 1- \delta_{f_t(\bm{x}),f_T(\bm{x})}$. Assuming not all $a_t = e_t$ then the probability $\bm{x} \in D$ is  $\leq \min_{v_t~s.t~a_t \neq e_t}  \frac{v_t}{|a_t - e_t|^2}$.
\end{lemma}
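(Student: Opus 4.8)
The plan is to recognize the bound as a direct application of Chebyshev's inequality to each intermediate model's binarized disagreement variable $p_t$. The crucial structural observation is that $p_t$ takes values in $\{0,1\}$ and, by construction, has mean $e_t = \mathbb{E}[p_t]$ and variance $v_t = \mathbb{V}[p_t]$ taken over a draw $\bm{x}' \sim D'$; boundedness guarantees both are finite, so Chebyshev applies without further hypotheses. The key conceptual link is that if the test point $\bm{x}$ does lie in $D'$, then its observed disagreement $a_t$ is precisely the realized value of $p_t$ for that point, which is what lets us connect the \emph{observed} quantities $a_t$ to the \emph{population} quantities $e_t, v_t$.

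First I would fix an arbitrary $t$ for which $a_t \neq e_t$ (this restriction is exactly what keeps the denominator $|a_t - e_t|^2$ nonzero and the bound well-defined). Chebyshev's inequality applied to $p_t$ with deviation radius $|a_t - e_t|$ gives $P(|p_t - e_t| \geq |a_t - e_t|) \leq v_t / |a_t - e_t|^2$. Next I would argue that the event ``$\bm{x} \in D'$'' is contained in the deviation event $\{|p_t - e_t| \geq |a_t - e_t|\}$: if $\bm{x} \in D'$ then $p_t$ realizes the value $a_t$, so $|p_t - e_t| = |a_t - e_t|$, which trivially satisfies the $\geq$ inequality. Monotonicity of probability then yields the per-checkpoint bound $P(\bm{x} \in D') \leq v_t / |a_t - e_t|^2$. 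Since this holds for every admissible $t$, I can take the tightest (minimum) bound over $\{t : a_t \neq e_t\}$ to conclude; the hypothesis that not all $a_t = e_t$ ensures this index set is nonempty.

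The main obstacle is making the second step fully rigorous, namely pinning down the probability space in which ``$P(\bm{x} \in D')$'' is meant and justifying the set-containment that reduces it to a Chebyshev tail. The cleanest route is to treat the relevant randomness as the draw $\bm{x}' \sim D'$ and interpret $P(\bm{x} \in D')$ as the mass $D'$ assigns to the sub-population whose disagreement at step $t$ matches the observed $a_t$; every such point contributes a deviation of exactly $|a_t - e_t|$ from the mean $e_t$, so this mass is controlled by the Chebyshev tail. I would state this identification carefully so that the containment $\{\bm{x}\in D'\} \subseteq \{|p_t - e_t| \geq |a_t - e_t|\}$ is literally valid rather than merely suggestive. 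A minor point worth verifying is that minimizing over $t$ is legitimate: each individual inequality upper-bounds the \emph{same} quantity $P(\bm{x}\in D')$, so their minimum does too, and crucially no union bound or cross-checkpoint independence assumption is needed.
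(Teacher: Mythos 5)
Your proposal is correct and follows essentially the same route as the paper: the paper's proof invokes Markov's inequality (in the squared-deviation form, i.e.\ Chebyshev, exactly as you use it) to bound the probability of each observed disagreement $a_t$ deviating from $e_t$, and then takes the minimum of these per-checkpoint bounds. Your write-up is more careful than the paper's one-line argument—particularly in pinning down the probability space and the containment $\{\bm{x}\in D'\}\subseteq\{|p_t-e_t|\geq|a_t-e_t|\}$—but the underlying idea is identical.
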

\begin{proof}
By Chebyshev's inequality we have the probability of a particular sequence $\{a_1,\ldots,a_T\}$ occurring for a training point is $\leq \frac{v_t}{|a_t - e_t|^2}$ for every $t$ (a bound on any of the individual $a_t$ occurring as that event is in the event $|p_t - e_t| \geq |a_t - e_t|$ occurs). By taking the minimum over all these upper-bounds we obtain our upper-bound.
\end{proof}

We do not guarantee Lemma~\ref{lem:prob_accept} is tight. Though we do take a minimum to make it tighter, this is a minimum over inequalities all derived from Chebyshev's inequality\footnote{One could potentially use information about the distribution of points not in $D$ to refine this bound.}. Despite this potential looseness, using the bound from Lemma~\ref{lem:prob_accept}, we can design a na\"ive selective classification protocol based on the "optimized = correct (often)" hypothesis and use the above bound on being a plausible training datapoint as our characterization of optimization; for a test input $\bm{x}$, if the upper-bound on the probability of being a datapoint in $D$ is lower than some threshold $\tau$ reject, else accept. However, the following question prevents us from readily using this method: \emph{How do $\mathbb{E}[p_{t}]$ and $\mathbb{V}[p_{t}]$ evolve during training?}

To answer this question, we propose to examine how the predictions on plausible training points evolve during training. Informally, the evolution of $\mathbb{E}[p_{t}]$ represents knowing how often we predict the final label at step $t$, while the evolution of $\mathbb{V}[p_{t}]$ represents knowing how we become more consistent as we continue training. Do note that the performance of this optimization-based approach to selective classification will depend on how unoptimized incorrect test points are. In particular, our hypothesis is that incorrect points often appear sufficiently un-optimized, yielding distinguishable patterns for $\mathbb{E}[p_{t}]$ and $\mathbb{V}[p_{t}]$ when compared to optimized points. We verify this behavior in Section~\ref{sec:emp_eval} where we discuss the distinctive label evolution patterns of explicitly optimized, correct, and incorrect datapoints.

\subsubsection{Last Disagreement Model Score For Discrete Prediction (\smax)}
\label{ssec:min_score}

Here, we propose a selective classification approach based on characterizing optimizing for a datapoint based off of Lemma~\ref{lem:prob_accept}. Recall the bound given in Lemma~\ref{lem:prob_accept} depends on expected values and variances for the $p_t$ (denoted $e_t$ and $v_t$ respectively). In Section~\ref{sec:emp_eval} we observe that $e_t$ quickly converge to $0$, and so by assuming $e_t = 0$ always\footnote{We tried removing this assumption and observed similar performance.} the frequentist bound on how likely a datapoint is a training point becomes $\min_{t~s.t~a_t = 1} \frac{v_t}{|a_t - e_t|^2}= \min_{t~s.t~a_t = 1}v_t$. Using this result for selective classification, we would impose acceptance if $\min_{t~s.t~a_t = 1}v_t \geq \tau$. Moreover, in Section~\ref{sec:emp_eval}, we further observe that $v_t$ monotonically decreases in a convex manner (after an initial burn-in phase). Hence, imposing $\min_{t~s.t~a_t = 1}v_t \geq \tau$ simply imposes a last checkpoint that can have a disagreement with the final prediction.
 
Based on these insights, we propose the following selective classification score: $s_{\max} = \max_{t~s.t~a_t = 1} \frac{1}{v_t}$. Note that this score directly follows from the previous discussion but flips the thresholding direction from $\min_{t~s.t~a_t = 1}v_t \geq \tau$ to $\max_{t~s.t~a_t = 1} \frac{1}{v_t} \leq \tau$ for consistency with the anomaly scoring literature~\citep{ruff2018deep}. Finally, we choose to approximate the empirical trend of $v_t$ as observed in Section~\ref{sec:emp_eval} with $v_t = 1 - t^k$ for $k \in [1,\infty)$. Based on the choice of $k$, this approximation allows us to (i) avoid explicit estimation of $v_t$ from validation data; and (ii) enables us to flexibly specify how strongly we penalize model disagreements late in training.

Hence, our first algorithm for selective classification is:
\begin{enumerate}
    \item Denote $L = f_T(\bm{x})$, i.e. the label our final model predicts.
    \item If $\exists t~s.t~a_t =1$ then compute $s_\text{max} = \max_{t~s.t~a_t = 1} \frac{1}{v_t}$ as per the notation in Section~\ref{ssec:reject_cond} (i.e $a_t = 1$ iff $f_t(x) \neq L$), else accept $\bm{x}$ with prediction $L$.
    \item If $s_\text{max} \leq \tau$ accept $\bm{x}$ with prediction $L$, else reject ($\bot$).
\end{enumerate}
Note once again, as all our candidate $\frac{1}{v_t}$ increase, the algorithm imposes a last intermediate model which can output a prediction that disagrees with the final prediction: hereafter, the algorithm must output models that consistently agree with the final prediction.

\subsubsection{Overall Disagreement Model Score (\ssum)}
\label{ssec:avg_score}

Note that the previous characterization of optimization, defined by the score \smax, could be sensitive to stochasticity in training and hence perform sub-optimally. That is, the exact time of the last disagreement, which \smax relies on, is subject to high noise across randomized training runs. In light of this potential limitation we propose the following "summation" algorithm which computes a weighted sum over training-time disagreements to get a more consistent statistic. Do note that typically to get a lower-variance statistic one would take an average, but multiplying by scalars can be replaced by correspondingly scaling the threshold we use. Hence, our proposed algorithm is: 
\begin{enumerate}
     \item Denote $L = f_T(\bm{x})$, i.e. the label our final model predicts.
     \item If $\exists t~s.t~a_t =1$, compute $s_\text{sum} = \sum_{t=1}^T \frac{a_t}{v_t} $, else accept $\bm{x}$ with prediction $L$. 
     \item If $s_\text{sum} \leq \tau$ accept $\bm{x}$ with prediction $L$, else reject ($\bot$).
\end{enumerate}
Recalling our previous candidates for $v_t$, we have the \ssum places higher weight on late disagreements. This gives us a biased average of the disagreements which intuitively approximates the expected last disagreement but now is less susceptible to noise. More generally, this statistic allows us to perform selective classification by utilizing information from all the disagreements during training. In Appendix~\ref{sec:max_v_sum}, we experimentally show that \ssum leads to more robust selective classification results compared to \smax. \textbf{We remark that the sum score \ssum corresponds exactly to our score $g(\cdot)$ proposed as part of \sptd (recall Equation~\ref{eq:score} from Section~\ref{sec:method_overview}), showcasing the strong connection of our method to forging.}

\section{Empirical Evaluation}
\label{sec:emp_eval}

We present a comprehensive empirical study demonstrating the effectiveness of \sptd across domains. Our results show that computing and thresholding the proposed weighted instability score from \sptd provides a strong score for selective classification, regression, and time series prediction.
\subsection{Classification}

\paragraph{Key Research Goals.} As part of our experiments we:
\begin{itemize}
    \item Study the accuracy/coverage trade-off with comparison to past work, showing that \sptd outperforms existing work.
    \item Present exemplary training-dynamics-derived label evolution curves for individual examples from all datasets.
    \item Examine our method's sensitivity to the checkpoint selection strategy and the weighting parameter~$k$.
    \item Evaluate the detection performance of out-of-distribution and adversarial examples, showing that \sptd can be applied beyond the i.i.d. assumption of selective prediction.
    \item Provide a detailed cost vs performance tradeoff of \sptd and competing selective prediction methods.
    \item Analyze distributional training dynamics patterns of both correct and incorrect data points, the separation of which enables performative selective classification.
\end{itemize} 

\paragraph{Datasets \& Training.} We evaluate \sptd on image dataset benchmarks that are common in the selective classification literature: CIFAR-10/CIFAR-100~\citep{krizhevsky2009learning}, StanfordCars~\citep{krause20133d}, and Food101~\citep{bossard14}. For each dataset, we train a deep neural network following the ResNet-18 architecture~\citep{he2016deep} and checkpoint each model after processing $50$ mini-batches of size $128$. All models are trained over $200$ epochs ($400$ epochs for StanfordCars) using the SGD optimizer with an initial learning rate of $10^{-2}$, momentum $0.9$, and weight decay $10^{-4}$. Across all datasets, we decay the learning rate by a factor of $0.5$ in $25$-epoch intervals.

\paragraph{Baselines.} We compare our method (\sptd) to common SC techniques previously introduced in Section~\ref{sec:background}: Softmax Response (\sr) and Self-Adaptive Training (\sat). Based on recent insights from~\cite{feng2023towards}, we (i) train \sat with additional entropy regularization\footnote{This entropy regularization step is designed to encourage the model to be more confident in its predictions.}; and (ii) derive \sat's score by applying Softmax Response (\sr) to the underlying classifier (instead of thresholding the abstention class). We refer to this method as \satersr. We do not include results for SelectiveNet, Deep Gamblers, or Monte-Carlo Dropout as previous works~\citep{huang2020self,feng2023towards} have shown that \fixed{\satersr} strictly dominates these methods. In contrast to recent SC works, we do however include results \fixed{with} Deep Ensembles (\de)~\citep{balaji2017uncertainty}, a relevant baseline from the uncertainty quantification literature. Our hyper-parameter tuning procedure is documented in Appendix~\ref{app:baseline_hyperparams}.

\begin{table*}[t]
\fontsize{7.5}{10}\selectfont
\tabcolsep=0.2cm
    \centering {
    \begin{tabular}{ccccccc}
\toprule
&  Coverage &       \sr &       \fixed{\satersr} &      \de &      \sptd &        \sptdde \\
\midrule
 \multirow{10}{*}{\rotatebox[origin=c]{90}{\textit{CIFAR-10}}} &       100 &  \underline{\bfseries 92.9 (±0.0)} & \underline{\bfseries 92.9 (±0.0)} & \bfseries 92.9 (±0.0) & \underline{\bfseries 92.9 (±0.0)} & \bfseries 92.9 (±0.1) \\
  &        90 &  \underline{96.4 (±0.1)} &  96.3 (±0.1) &  \bfseries 96.8 (±0.1) &  \underline{96.5 (±0.0)} &  \bfseries 96.7 (±0.1) \\
  &        80 &  98.1 (±0.1) &  98.1 (±0.1) &  \bfseries 98.7 (±0.0) &  \underline{98.4 (±0.1)} &  \bfseries 98.8 (±0.1) \\
  &        70 &  98.6 (±0.2) &  99.0 (±0.1) &  \bfseries 99.4 (±0.1) &  \underline{99.2 (±0.0)} &  \bfseries 99.5 (±0.0) \\
  &        60 &  98.7 (±0.1) &  99.4 (±0.0) &  99.6 (±0.1) &  \underline{\bfseries 99.6 (±0.2)} &  \bfseries 99.8 (±0.0) \\
  &        50 &  98.6 (±0.2) &  \underline{99.7 (±0.1)} &  99.7 (±0.1) &  \underline{99.8 (±0.0)} &  \bfseries 99.9 (±0.0) \\
  &        40 &  98.7 (±0.0) &  \underline{99.7 (±0.0)} &  99.8 (±0.0) &  \underline{99.8 (±0.1)} & \bfseries 100.0 (±0.0) \\
  &        30 &  98.5 (±0.0) &  \underline{99.8 (±0.0)} &  99.8 (±0.0) &  \underline{99.8 (±0.1)} & \bfseries 100.0 (±0.0) \\
  &        20 &  98.5 (±0.1) &  \underline{99.8 (±0.1)} &  99.8 (±0.0) & \underline{\bfseries 100.0 (±0.0)} & \bfseries 100.0 (±0.0) \\
  &        10 &  98.7 (±0.0) &  99.8 (±0.1) &  99.8 (±0.1) & \underline{\bfseries 100.0 (±0.0)} & \bfseries 100.0 (±0.0) \\
 \midrule
  \multirow{10}{*}{\rotatebox[origin=c]{90}{\textit{CIFAR-100}}}  &       100 &  \underline{\bfseries 75.1 (±0.0)} &  \underline{\bfseries 75.1 (±0.0)} &  \bfseries 75.1 (±0.0) &  \underline{\bfseries 75.1 (±0.0)} &  \bfseries 75.1 (±0.0) \\
& 90 & 78.2 (± 0.1) & 78.9 (± 0.1) & 80.2 (± 0.0) & \underline{80.4 (± 0.1)} & \bfseries 81.1 (± 0.1) \\
& 80 & 82.1 (± 0.0) & 82.9 (± 0.0) & 84.7 (± 0.1) & \underline{84.6 (± 0.1)} & \bfseries 85.0 (± 0.2) \\
& 70 & 86.4 (± 0.1) & 87.2 (± 0.1) & 88.6 (± 0.1) & \underline{\textbf{88.7 (± 0.0)}} & \bfseries 88.8 (± 0.1) \\
& 60 & 90.0 (± 0.0) & 90.3 (± 0.2) & 90.2 (± 0.2) & \underline{90.1 (± 0.0)} & \bfseries 90.4 (± 0.1) \\
& 50 & 92.9 (± 0.1) & 93.3 (± 0.0) & 94.8 (± 0.0) & \underline{94.6 (± 0.0)} & \bfseries 94.9 (± 0.0) \\
& 40 & 95.1 (± 0.0) & 95.2 (± 0.1) & \textbf{96.8 (± 0.1)} & \underline{\textbf{96.9 (± 0.1)}} & \bfseries 96.9 (± 0.0) \\
& 30 & 97.2 (± 0.2) & 97.5 (± 0.0) & \textbf{98.4 (± 0.1)} & \underline{\textbf{98.4 (± 0.1)}} & \bfseries 98.5 (± 0.0) \\
& 20 & 97.8 (± 0.1) & 98.3 (± 0.1) & \textbf{99.0 (± 0.0)} & \underline{98.8 (± 0.2)} & \bfseries 99.2 (± 0.1) \\
& 10 & 98.1 (± 0.0) & 98.8 (± 0.1) & 99.2 (± 0.1) & \underline{\textbf{99.4 (± 0.1)}} & \bfseries 99.6 (± 0.1) \\
\midrule

    \multirow{10}{*}{\rotatebox[origin=c]{90}{\textit{Food101}}} &       100 &  \underline{\bfseries 81.1 (±0.0)} &  \underline{\bfseries 81.1 (±0.0)} & \bfseries  81.1 (±0.0) & \underline{\bfseries 81.1 (±0.0)} & \bfseries 81.1 (±0.0) \\
     &        90 &  85.3 (±0.1) &  85.5 (±0.2) &  86.2 (±0.1) &  \underline{85.7 (±0.0)} &  \bfseries 86.7 (±0.0) \\
     &        80 &  87.1 (±0.0) &  89.5 (±0.0) &  90.3 (±0.0) &  \underline{89.9 (±0.0)} &  \bfseries 91.3 (±0.1) \\
     &        70 &  92.1 (±0.1) &  92.8 (±0.1) &  \bfseries 94.5 (±0.1) &  \underline{93.7 (±0.0)} &  \bfseries 94.6 (±0.0) \\
     &        60 &  95.2 (±0.1) &  95.5 (±0.1) &  \bfseries 97.0 (±0.0) &  \underline{\bfseries 97.0 (±0.0)} &  \bfseries 97.0 (±0.0) \\
     &        50 &  97.3 (±0.1) &  97.5 (±0.0) &  98.2 (±0.0) &  \underline{\bfseries98.3 (±0.2)} &  \bfseries 98.5 (±0.0) \\
     &        40 &  98.7 (±0.0) &  98.7 (±0.2) &  \bfseries 99.1 (±0.0) &  \underline{99.1 (±0.1)} &  \bfseries 99.2 (±0.1) \\
     &        30 &  99.5 (±0.0) &  99.7 (±0.2) &  99.2 (±0.0) &  \underline{99.6 (±0.0)} &  \bfseries 99.7 (±0.0) \\
     &        20 &  99.7 (±0.1) &  99.7 (±0.2) &  \bfseries 99.9 (±0.1) &  \underline{\bfseries 99.8 (±0.0)} &  \bfseries 99.9 (±0.1) \\
     &        10 &  99.8 (±0.0) &  99.8 (±0.1) &  \bfseries 99.9 (±0.1) &  \underline{\bfseries 99.9 (±0.1)} &  \bfseries 99.9 (±0.1) \\
  \midrule
    \multirow{10}{*}{\rotatebox[origin=c]{90}{\textit{StanfordCars}}} &       100 & \bfseries \underline{77.6 (±0.0)} & \underline{\bfseries 77.6 (±0.0)} & \bfseries 77.6 (±0.0) & \underline{\bfseries 77.6 (±0.0)} & \bfseries 77.6 (±0.0) \\
     &        90 &  83.0 (±0.1) &  83.0 (±0.2) &  \bfseries 83.7 (±0.1) &  \underline{83.3 (±0.1)} &  \bfseries 83.7 (±0.2) \\
     &        80 &  87.6 (±0.0) &  88.0 (±0.1) &  88.7 (±0.1) &  \underline{\bfseries 89.3 (±0.0)} &  \bfseries 89.7 (±0.0) \\
     &        70 &  90.8 (±0.0) &  92.2 (±0.1) &  92.4 (±0.1) &  \underline{\bfseries 93.6 (±0.0)} &  93.4 (±0.1) \\
     &        60 &  93.5 (±0.1) &  95.2 (±0.1) &  95.3 (±0.0) &  \underline{\bfseries 96.2 (±0.0)} &  \bfseries 96.3 (±0.0) \\
     &        50 &  95.3 (±0.0) &  \underline{96.9 (±0.2)} &  96.4 (±0.1) &  \underline{\bfseries 97.0 (±0.1)} &  \bfseries 97.1 (±0.3) \\
     &        40 &  96.8 (±0.0) &  \underline{97.8 (±0.0)} &  \bfseries 97.8 (±0.2) &  \underline{\bfseries 97.8 (±0.1)} &  \bfseries 97.8 (±0.0) \\
     &        30 &  97.5 (±0.1) &  \underline{98.2 (±0.2)} &  \bfseries 98.6 (±0.0) &  \underline{98.2 (±0.2)} &  \bfseries 98.9 (±0.0) \\
     &        20 &  98.1 (±0.0) &  \underline{98.4 (±0.1)} &  \bfseries 98.9 (±0.2) &  \underline{98.6 (±0.0)} &  \bfseries 99.0 (±0.0) \\
     &        10 &  98.2 (±0.1) &  \underline{98.7 (±0.1)} &  \bfseries 99.5 (±0.1) &  \underline{98.5 (±0.1)} &  \bfseries 99.5 (±0.0) \\
\bottomrule
\end{tabular}
\caption{\textbf{Selective accuracy achieved across coverage levels}. We find that \texttt{SPTD}-based methods outperforms current SOTA error rates across multiple datasets with full-coverage accuracy alignment. Numbers are reported with mean values and standard deviation computed over 5 random runs. \textbf{Bold} numbers are best results at a given coverage level across all methods and \underline{underlined} numbers are best results for methods relying on a single training run only. Datasets are consistent with~\cite{feng2023towards}.}
    \label{tab:target_cov}
    }
\end{table*} 

\paragraph{Accuracy/Coverage Trade-off.} Consistent with standard evaluation schemes for selective classification, our main experimental results examine the accuracy/coverage trade-off of \sptd. We present our performance results with comparison to past work in Table~\ref{tab:target_cov} where we demonstrate \sptd's effectiveness on CIFAR-10, CIFAR-100, StanfordCars, and Food101. We document the results obtained by \sptd, \sat, \sr, and \de across the full coverage spectrum. We see that \sptd outperforms both \sat and \sr and performs similarly as \de. To further boost performance across the accuracy/coverage spectrum, we combine \sptd and \de by applying \sptd on each ensemble member from \de and then average their scores. More concretely, we estimate $\sptdde = \frac{1}{m}\sum_{m=1}^M \sptd_m$ where $\sptd_m$ computes $g$ on each ensemble member $m\in[M]$. This combination leads to new state-of-the-art selective classification performance and showcases that \sptd can be flexibly applied on top of established training pipelines. 
\newlyadded{
Further evidence towards this flexibility is provided in Appendix~\ref{sec:sptd_on_sat} where we show that applying \sptd on top of \sat also improves selective prediction performance.
}

\begin{figure*}[t]
\vspace{-5pt}
\begin{subfigure}{.24 \linewidth}
  \centering
  \includegraphics[width=\linewidth]{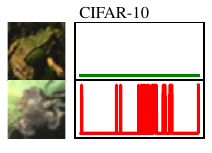}
\end{subfigure}
\hfill
\begin{subfigure}{.24 \linewidth}
  \centering
  \includegraphics[width=\linewidth]{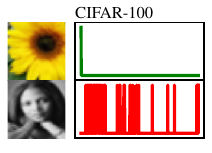}
\end{subfigure}
\hfill
\begin{subfigure}{.24 \linewidth}
  \centering
  \includegraphics[width=\linewidth]{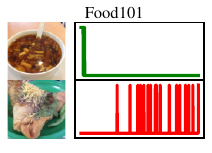}
\end{subfigure}
\hfill
\begin{subfigure}{.24 \linewidth}
  \centering
  \includegraphics[width=\linewidth]{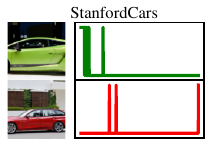}
\end{subfigure}

\caption{\textbf{Most characteristic examples across datasets}. For each dataset, we show the samples with the most stable and most unstable (dis-) agreement with the final label along with their corresponding $a_t$ indicator function. Correct points are predominantly characterized by disagreements early in training while incorrect points change their class label throughout (but importantly close to the end of) training. We provide additional examples from all datasets in Figure~\ref{fig:indiv_ex_ext} in the Appendix.}
\label{fig:indiv_ex}
\end{figure*}

\paragraph{Individual Evolution Plots.} To analyze the effectiveness of our disagreement metric proposed in Section~\ref{sec:method}, we examine the evolution curves of our indicator variable $a_t$ for individual datapoints in Figure~\ref{fig:indiv_ex}. In particular, for each dataset, we present the most stable and the most unstable data points from the test sets and plot the associated label disagreement metric $a_t$ over all checkpoints. We observe that easy-to-classify examples only show a small degree of oscillation while harder examples show a higher frequency of oscillations, especially towards the end of training. This result matches our intuition: our model should produce correct decisions on data points whose prediction is mostly constant throughout training and should reject data points for which intermediate models predict inconsistently. Moreover, as depicted in Figure~\ref{fig:scores}, we also show that our score $g(\cdot)$ yields distinct distributional patterns for both correctly and incorrectly classified points. This separation enables strong coverage/accuracy trade-offs via our thresholding procedure.

\begin{figure*}[t]
  \centering
  \includegraphics[width=\linewidth]{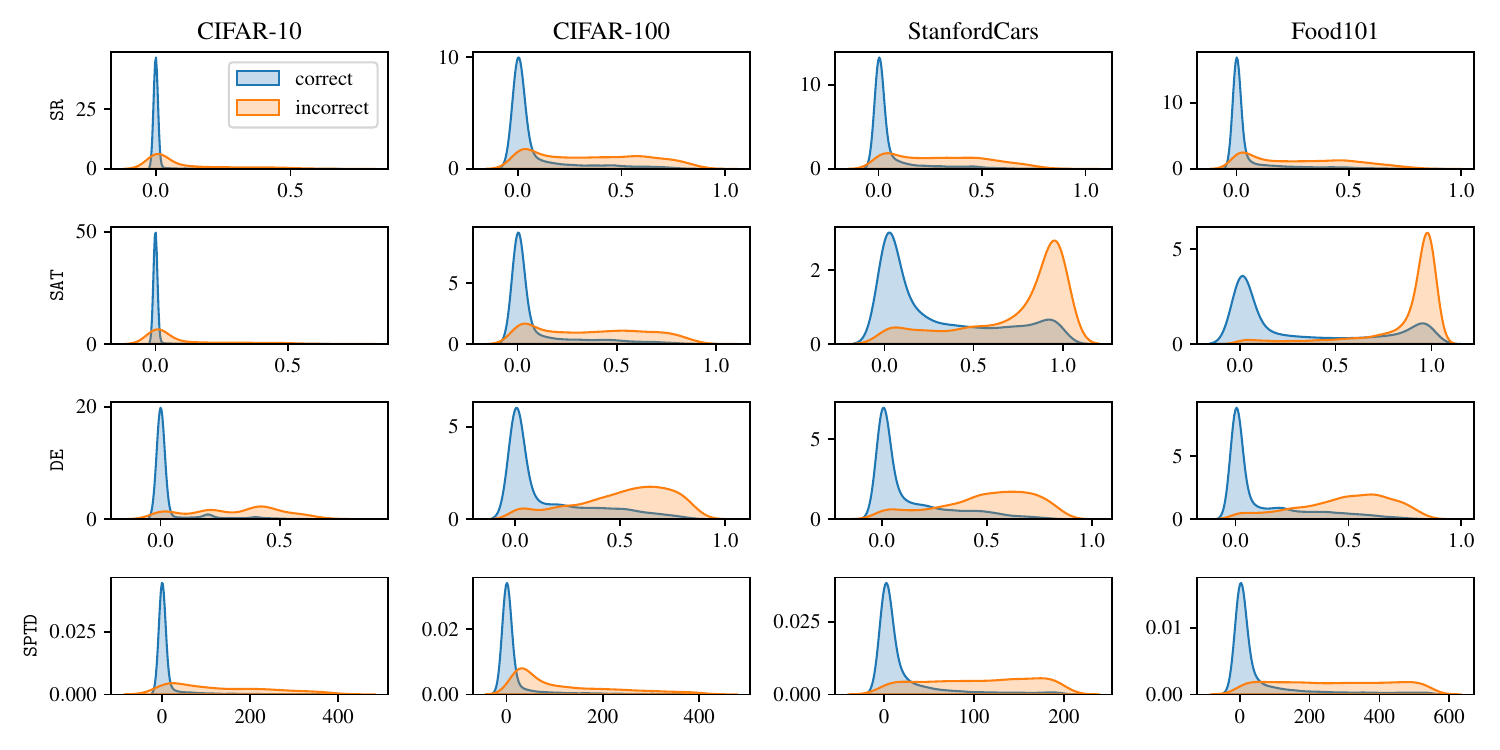}
\caption{\textbf{Distribution of $g$ for different datasets and selective classification methods.}  Since all methods are designed to address the selective prediction problem, they all manage to separate correct from incorrect points (albeit at varying success rates). We see that \sptd spreads the scores for incorrect points over a wide range with little overlap. We observe that for \sr, incorrect and correct points both have their mode at approximately the same location which hinders performative selective classification. Although \sat and \de show larger bumps at larger score ranges, the separation with correct points is weaker as correct points also result in higher scores more often than for \sptd. }
\label{fig:scores}
\end{figure*}

\begin{figure*}[t]
  \centering
  \includegraphics[width=\linewidth]{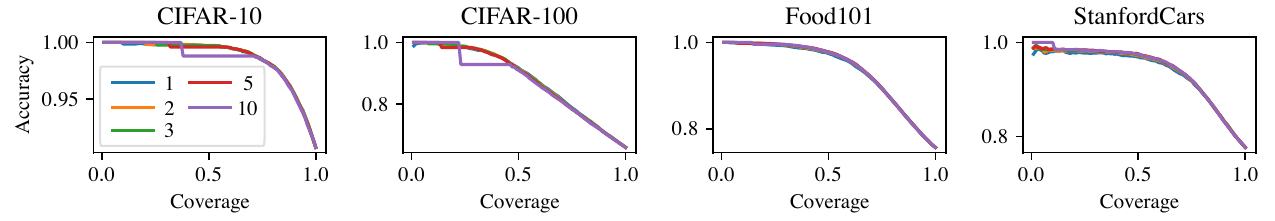}
\caption{\textbf{Coverage/error trade-off of \texttt{SPTD} for varying checkpoint weighting $k$ as used in $v_t$.} We observe strong performance for \fixed{$k \in [1,3]$} across datasets.
}
\label{fig:weighting}
\end{figure*}

\paragraph{Checkpoint Weighting Sensitivity.} One important hyper-parameter of our method is the weighting of intermediate predictions. Recall from Section~\ref{sec:method} that \sptd approximates the expected stability for correctly classified points via a weighting function $v_t = (\frac{t}{T})^k$. In Figure~\ref{fig:weighting} in the Appendix, we observe  that \sptd is robust to the choice of $k$ and that \fixed{$k \in [1,3]$} performs best. At the same time, we find that increasing $k$ too much leads to a decrease in accuracy at medium coverage levels. This result emphasizes that (i)~large parts of the training process contain valuable signals for selective classification; and that (ii)~early label disagreements arising at the start of optimization should be de-emphasized by our method.

\begin{figure*}[t]
  \centering
  \includegraphics[width=\linewidth]{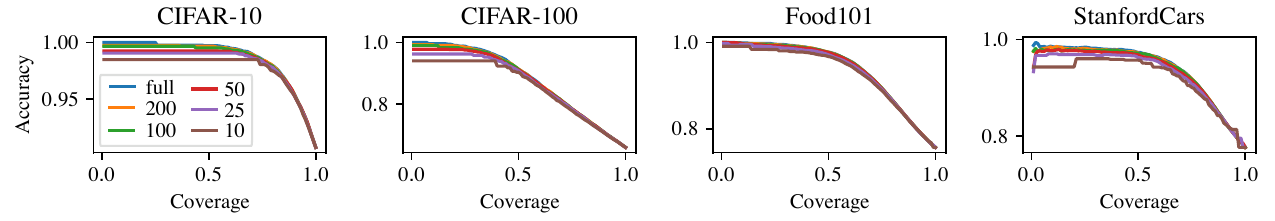}
\caption{\textbf{Coverage/error trade-off of \texttt{SPTD} for varying checkpoint counts}. \sptd delivers consistent performance independent of the checkpointing resolution at high coverage. At low coverage, a more detailed characterization of training dynamics helps.
}
\label{fig:resolution}
\end{figure*}

\paragraph{Checkpoint Selection Strategy.} The second important hyper-parameter of our method is the checkpoint selection strategy. In particular, to reduce computational cost, we study the sensitivity of \sptd with respect to the checkpointing resolution in Figure~\ref{fig:resolution}. Our experiments demonstrate favorable coverage/error trade-offs between $25$ and $50$ checkpoints when considering the full coverage spectrum. However, when considering the high coverage regime in particular (which is what most selective prediction works focus on), even sub-sampling $10$ intermediate models is sufficient for SOTA selective classification. Hence, with only observing the training stage, our method's computational overhead reduces to only $10$ forward passes at test time when the goal is to reject at most $30\%-50\%$ of incoming data points. In contrast, \de requires to first train $E$ models (with $E=10$ being a typical and also our particular choice for \de) and perform inference on these $E$ models at test time. Further increasing the checkpointing resolution does offer increasingly diminishing returns but also leads to improved accuracy-coverage trade-offs, especially at low coverage.

\begin{figure*}[t]
  \centering
  \includegraphics[width=\linewidth]{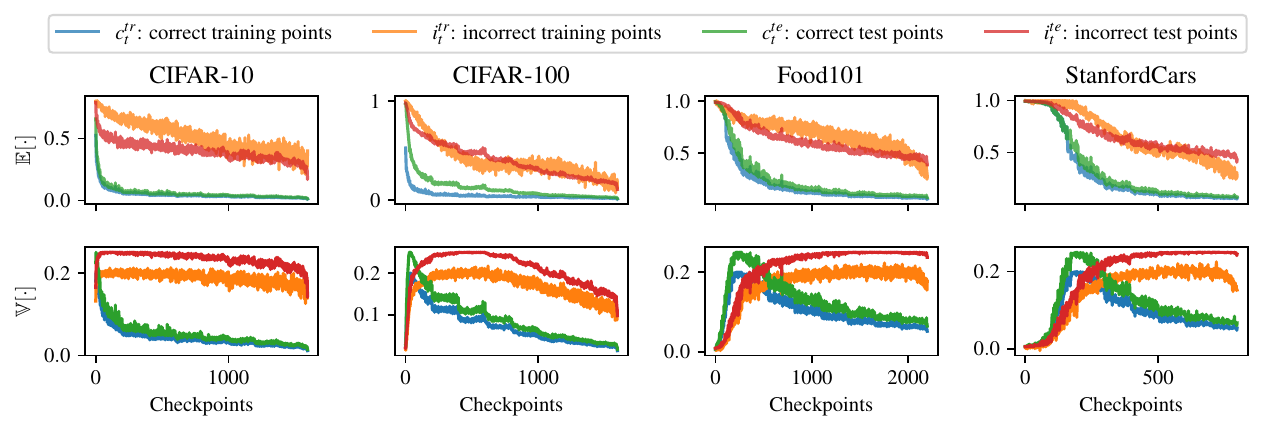}
\caption{\textbf{Monitoring expectations $\mathbb{E}[\cdot]$ and variances $\mathbb{V}[\cdot]$ for correct/incorrect training and test points}. We observe that correctly classified points (cold colors) have both their expectations and variances quickly decreasing to 0 as training progresses. Incorrectly classified points (warm colors) both exhibit large expectations and variances and stay elevated over large periods.}
\label{fig:exp_var_trends}
\end{figure*}

\paragraph{Examining the Convergence Behavior of Training and Test Points.}

The effectiveness of \sptd relies on our hypothesis that correctly classified points and incorrectly classified points exhibit distinct training dynamics. We verify this hypothesis in Figure~\ref{fig:exp_var_trends} where we examine the convergence behavior of the disagreement distributions of correct ($c^\text{tr}_t$) / incorrect ($i^\text{tr}_t$) training and correct ($c^\text{te}_t$) / incorrect ($i^\text{te}_t$) test points. We observe that the expected disagreement for both correctly classified training $c^\text{tr}_t$ and test points $c^\text{te}_t$ points converge to $0$ over the course of training. The speed of convergence is subject to the difficulty of the optimization problem with more challenging datasets exhibiting slower convergence in predicted label disagreement. We also see that the variances follow an analogous decreasing trend. This indicates that correctly classified points converge to the final label quickly and fast convergence is strongly indicative of correctness. Furthermore, the overlap suggests that correct test points are more likely to be forgeable as their dynamics look indistinguishable to correct training points (recall Section~\ref{sec:forging} on the connection between our method and forging). In contrast, incorrectly classified points $i^\text{tr}_t$ and $i^\text{te}_t$ show significantly larger mean and variance levels. This clear separation in distributional evolution patterns across correct and incorrect points leads to strong selective prediction performance in our \sptd framework.

\label{sec:ts_exp}
\begin{figure*}[t]
    \centering
    \includegraphics[width=\linewidth]{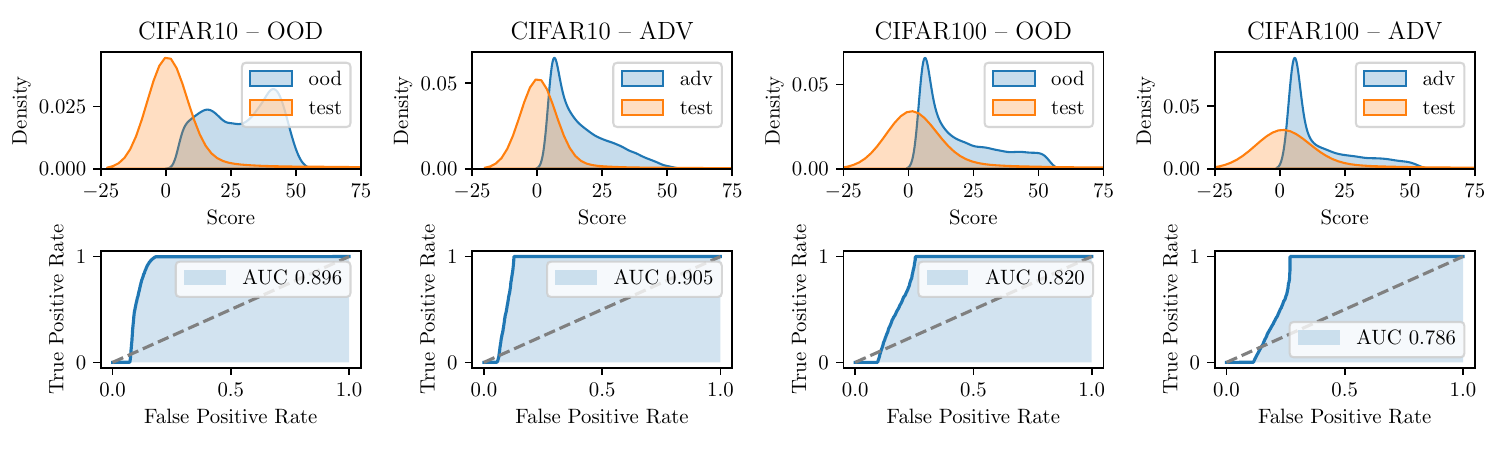}
    \caption{\textbf{Performance of \sptd on out-of-distribution (OOD) and adversarial sample detection}. The first row shows the score distribution of the in-distribution CIFAR-10/100 test set vs the SVHN OOD test set or a set consisting of adversarial samples generated via a PGD attack in the final model. The second row shows the effectiveness of a thresholding mechanism by computing the area under the ROC curve. Our score enables separation of anomalous data points from in-distribution test points.}
    \label{fig:adv_ood}
\end{figure*}

\paragraph{Detection of Out-of-Distribution and Adversarial Examples.}

Out-of-distribution (OOD) and adversarial example detection are important disciplines in trustworthy ML related to selective prediction. We therefore provide preliminary evidence in Figure~\ref{fig:adv_ood} that our method can be used for detecting OOD and adversarial examples. While these results are encouraging, we remark that adversarial and OOD samples are less well defined as incorrect data points and can come in a variety of different flavors (\ie various kinds of attacks or various degrees of OOD-ness). As such, we strongly believe that future work is needed to determine whether a training-dynamics-based approach to selective prediction can be reliably used for OOD and adversarial sample identification. 

\paragraph{Cost vs Performance Tradeoff.} 

In Table~\ref{tab:cost}, we report both the time and space complexities for all SC methods at training and test time along with their selective classification performance as per our results in Table~\ref{tab:target_cov} and Figure~\ref{fig:resolution}. We denote with $E$ the number of  \de models and with $T$ the number of \sptd checkpoints. Although \sr and \sat are the cheapest methods to run, they also perform the poorest at SC. \sptd is significantly cheaper to train than \de and achieves competitive performance at $T \approx E$. Although \sptdde is the most expensive model, it also provides the strongest performance.

\begin{figure*}[t]
    \centering
    \includegraphics[width=0.97\linewidth]{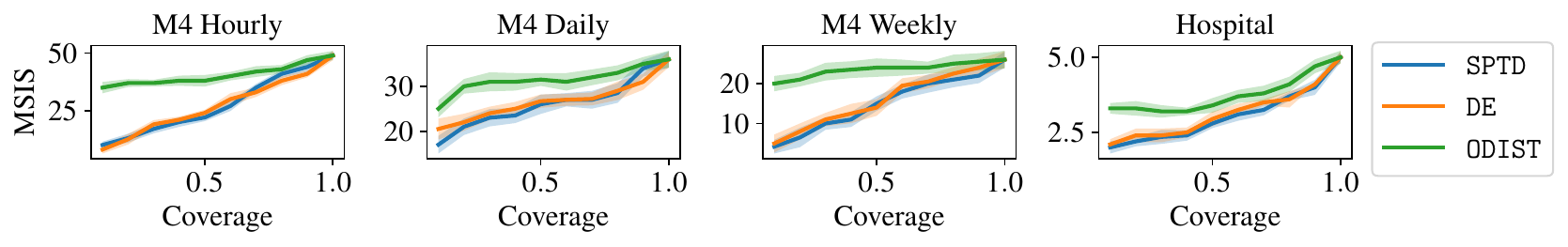}
    \caption{\textbf{MSIS/coverage trade-off across various time series prediction datasets}. \sptd offers comparable performance to \de but provides improved results at low coverage.}
    \label{fig:ts}
\end{figure*}

\begin{table}[t]
\tabcolsep=0.12cm
\small
    \centering 
     \begin{tabular}{cccccc} 
     \toprule
     Method & Train Time & Train Space & Inf Time & Inf Space & Rank \\ 
     \midrule
     \sr & $O(1)$ & $O(1)$ & $O(1)$ & $O(1)$ & 5 \\
     \sat & $O(1)$ & $O(1)$ & $O(1)$ & $O(1)$ & 4 \\
     \de & $O(E)$ & $O(E)$ & $O(E)$ & $O(E)$ & =2 \\
     \sptd & $O(1)$ & $O(T)$ & $O(T)$ & $O(T)$ & =2 \\
     \sptdde & $O(E)$ & $O(ET)$ & $O(ET)$ & $O(ET)$ & 1\\ 
     \bottomrule
    \end{tabular}
    \caption{\textbf{Cost vs performance tradeoff in terms of training time/space, inference time/space and the performance rank.} \sptd is comparable in performance (at $T \approx E$) and cheaper to train than \de. \sptdde is the most expensive model, but delivers the best performance across datasets.}
    \label{tab:cost}
\end{table}

\subsection{Regression Experiments}
\label{sec:regr_exp}

\begin{figure*}[t]
    \centering
    \includegraphics[width=0.97\linewidth]{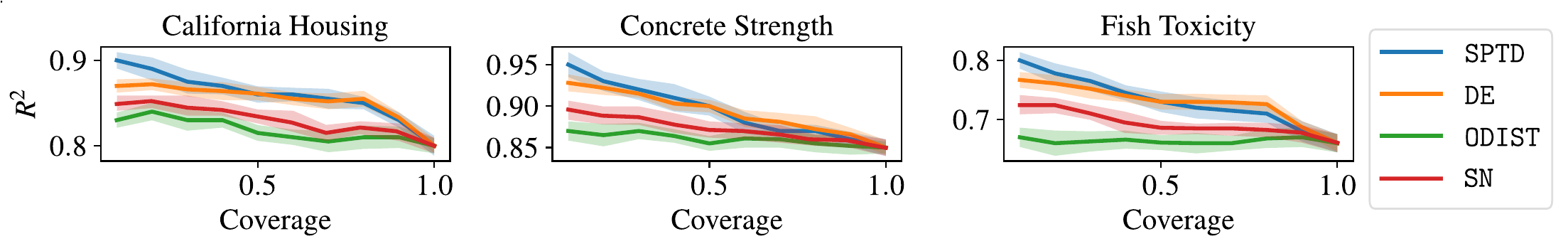}
    \caption{\fixed{\textbf{$R^2$/coverage trade-off across various regression datasets}. \sptd offers comparable performance to \de but provides improved results at low coverage.}}
    \label{fig:regr}
\end{figure*}

\paragraph{Datasets.} Our experimental suite for regression considers the following datasets: California housing dataset~\citep{pace1997sparse} ($N=20640$, $D=8$), the concrete strength dataset~\citep{misc_concrete_compressive_strength_165} ($N=1030$, $D=9$), and the fish toxicity dataset~\citep{misc_qsar_fish_toxicity_504} ($N=546$, $D=9$). 

\paragraph{Model Setup \& Baselines.} We split all datasets into $80\%$ training and $20\%$ test sets after a random shuffle. Then, we train a fully connected neural network with layer dimensionalities $D \rightarrow 10 \rightarrow 7 \rightarrow 4 \rightarrow 1$. Optimization is performed using full-batch gradient descent using the Adam optimizer with learning rate $10^{-2}$ over $200$ epochs and weight decay $10^{-2}$. We consider the following baseline methods for rejecting input samples: (i) approximating the predictive variance using deep ensembles (\de) \citep{balaji2017uncertainty, zaoui2020regression}; (ii) SelectiveNet (\sn) which explicitly optimizes utility given a desired coverage constraint; and (iii) training the model with a Gaussian parametric output distribution (\odist) via maximum likelihood maximization \citep{alexandrov2019gluonts}.

\paragraph{Main results.} We document our results in Figure~\ref{fig:regr}. We see that the \odist only delivers subpar results (likely due to mis-calibration) and does not provide a meaningful signal for selective prediction. On the other hand, \de and \sptd perform comparably with \sptd outperforming \de at low coverage. We stress again that \sptd's training cost is significantly cheaper than \de's while matching the inference-time cost when sub-sampling a reduced set of checkpoints. 

\subsection{Time Series Experiments}
\label{sec:ts_exp}

\paragraph{Datasets.} As part of our time series experiments, we mainly consider the M4 forecasting competition dataset~\citep{makridakis2020m4} which contains time series aggregated at various time intervals (\eg hourly, daily). In addition, we also provide experimentation on the Hospital dataset~\citep{hyndman2015expsmooth}. 

\paragraph{Models \& Setup.} Our experimentation is carried out using the GluonTS time series framework~\citep{alexandrov2019gluonts}. We carry out our experimentation using the DeepAR model \citep{salinas2020deepar}, a recurrent neural network designed for time series forecasting. We train all models over 200 epochs and evaluate performance using the mean scaled interval score (MSIS) performance metric~\citep{makridakis2020m4}. Our baselines correspond to the same as presented for regression in Section~\ref{sec:regr_exp}: deep ensembles (\de), and output parameterization using a Student-t distribution (\odist).

\paragraph{Main results.} Our time series results are shown in Figure~\ref{fig:ts} and are consistent with our results for regression: \odist does not provide a meaningful signal for selective prediction while \sptd and \de perform similarly well. \sptd further improves results over \de at low converge.

\section{Conclusion}

In this work we have proposed \sptd, a selective prediction technique that relies on measuring prediction instability of test points over intermediate model states obtained during training. Our method offers several advantages over previous works. In particular (i) it can be applied to all existing models whose checkpoints were recorded (hence the potential for immediate impact); (ii) it is composable with existing selective prediction techniques; (iii) it can be readily applied to both discrete and real-valued prediction problems; and (iv) it is more computationally efficient than competing ensembling-based approaches. We verified the performance of \sptd using an extensive empirical evaluation, leading to new state-of-the-art performance in the field. Beyond our work, we expect training dynamics information to be useful for identifying and mitigating other open problems in trustworthy machine learning such as (un)fairness, privacy, and model interpretability.
\section*{Acknowledgements}

This work was supported by CIFAR (through a Canada CIFAR AI Chair), by NSERC (under the Discovery Program), and by a gift from Ericsson. Anvith Thudi is supported by a Vanier Fellowship. We are also grateful to the Vector Institute’s sponsors for their financial support. In particular, we thank Roger Grosse, Chris Maddison, Franziska Boenisch, Natalie Dullerud, Mohammad Yaghini, Sierra Wyllie, Jonas Guan, Michael Zhang, and Tom Ginsberg for fruitful discussions. We would also like to thank the Vector ops and services teams for making the office such a wonderful place to work, and for the (limitless) free hot chocolate.

\appendix

\section{Alternate Metric Choices}
\label{sec:alt_scores}

We briefly discuss additional potential metric choices that we investigated but which lead to selective classification performance worse than our main method.

\subsection{Jump Score \sjmp} We also consider a score which captures the level of disagreement between the predicted label of two successive intermediate models (\ie how much jumping occurred over the course of training). For $j_t = 0$ iff $f_{t}(\bm{x}) = f_{t-1}(\bm{x})$ and $j_t = 1$ otherwise we can compute the jump score as $s_\text{jmp} = 1 - \sum v_t j_t$ and threshold it as in \S~\ref{ssec:min_score} and \S~\ref{ssec:avg_score}. 

\subsection{Variance Score \svar for Continuous Metrics}
We consider monitoring the evolution of continuous metrics that have been shown to be correlated with example difficulty. These metrics include (but are not limited to):
\begin{itemize}
	\item Confidence (conf): $\max _{c \in \mathcal{Y}} f_{t}(\bm{x})$
	\item Confidence gap between top 2 most confident classes (gap): $\max _{c \in \mathcal{Y}} f_{t}(\bm{x})  - \max _{c \not = \hat{y}} f_{t}(\bm{x})$
	\item Entropy (ent): $-\sum_{c=1}^C f_{t}(\bm{x})_c \log \left(f_{t}(\bm{x})_c\right)$
\end{itemize}
\cite{jiang2020characterizing} show that  
example difficulty is correlated with confidence and entropy. Moreover, they find that difficult examples are learned later in the training process. This observation motivates designing a score based on these continuous metrics that penalises changes later in the training process more heavily.
We consider the maximum softmax class probability known as confidence, the negative entropy and the gap between the most confident classes for each example instead of the model predictions. 
Assume that any of these metrics is given by a sequence $z = \{z_1,\ldots,z_T\}$ obtained from $T$ intermediate models. Then we can capture the uniformity of $z$ via a (weighted) variance score $s_\text{var} = \sum_{t} w_t (z_t - \mu)^2$ for mean $\mu = \frac{1}{T}\sum_{t} z_t$ and an increasing weighting sequence $w = \{w_1,\ldots,w_T\}$.

In order to show the effectiveness of the variance score \svar for continuous metrics, we provide a simple bound on the variance of confidence  $\max _{y \in \mathcal{Y}} f_{t}(\bm{x})$ in the final checkpoints of the training. 
Assuming that the model has converged to a local minima with a low learning rate, we can assume that the distribution of model weights can be approximated by a Gaussian distribution. 

We consider a linear regression problem where the inputs are linearly separable. 

\begin{lemma}
Assume that we have some Gaussian prior on the model parameters in the logistic regression setting across $m$ final checkpoints. More specifically, given $T$ total checkpoints of model parameters $\{\bm{w}_1, \bm{w_2}, \dots, \bm{w}_T \}$ we have $p(W = \bm{w}_t) = \mathcal{N}(\bm{w}_0 \mid \bm{\mu}, s\bm{I}) $ for $t \in \{T - m + 1, \dots, T\}$ and we assume that final checkpoints of the model are sampled from this distribution. We show that the variance of model confidence $ \max_{y \in \{-1, 1\}} p(y \mid \bm{x}_i, \bm{w}_t)$ for a datapoint $(\bm{x}_i, y_i)$ can be upper bounded by a factor of probability of correctly classifying this example by the optimal weights. 

\end{lemma}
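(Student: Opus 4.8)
The plan is to reduce the confidence to a one-dimensional function of a Gaussian margin, control its variance with the Gaussian Poincar\'e inequality, and then match the resulting derivative term to the optimal-weight classification probability. First I would make the confidence explicit. In the logistic-regression model with labels $y\in\{-1,1\}$ we have $p(y\mid \bm{x}_i,\bm{w})=\sigma(y\,\bm{w}^\top\bm{x}_i)$ with $\sigma$ the logistic sigmoid, so the confidence is $\max_{y}p(y\mid\bm{x}_i,\bm{w})=\sigma(|\bm{w}^\top\bm{x}_i|)$. Because the final checkpoints satisfy $\bm{w}\sim\mathcal{N}(\bm{\mu},s\bm{I})$, the margin $Z:=\bm{w}^\top\bm{x}_i$ is itself Gaussian, $Z\sim\mathcal{N}(m_0,\sigma_0^2)$ with $m_0=\bm{\mu}^\top\bm{x}_i$ and $\sigma_0^2=s\|\bm{x}_i\|^2$. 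This collapses the statement to bounding $\mathbb{V}[\sigma(|Z|)]$ for a scalar Gaussian $Z$, which I regard as the clean reformulation to work from.

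Next I would invoke the Gaussian Poincar\'e inequality. Writing $g(\bm{w})=\sigma(|\bm{w}^\top\bm{x}_i|)$, it gives $\mathbb{V}[g(\bm{w})]\le s\,\mathbb{E}\big[\|\nabla_{\bm{w}}g(\bm{w})\|^2\big]$. Since $\nabla_{\bm{w}}g=\sigma'(|\bm{w}^\top\bm{x}_i|)\,\operatorname{sign}(\bm{w}^\top\bm{x}_i)\,\bm{x}_i$ almost everywhere, this reduces to $\mathbb{V}[\sigma(|Z|)]\le s\|\bm{x}_i\|^2\,\mathbb{E}[\sigma'(|Z|)^2]$. (The map $u\mapsto\sigma(|u|)$ fails to be differentiable only at $u=0$, a measure-zero event under the Gaussian, so the Lipschitz form of Poincar\'e applies without issue.) I would then use $\sigma'(u)=\sigma(u)\big(1-\sigma(u)\big)$ together with $\sigma'\le\tfrac14$ to write $\sigma'(|Z|)^2\le\tfrac14\,\sigma(|Z|)\big(1-\sigma(|Z|)\big)$, where $1-\sigma(|Z|)=\sigma(-|Z|)$ is exactly the per-checkpoint probability that the model errs on $\bm{x}_i$ and $\sigma(|Z|)$ is the probability it is correct.

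Finally I would connect the random margin to the optimal weights. Under the stated assumption that the model has converged with a low learning rate, $s$ is small and $Z$ concentrates around $m_0=\bm{\mu}^\top\bm{x}_i$; a first-order (delta-method) expansion then replaces $\mathbb{E}[\sigma'(|Z|)^2]$ by $\sigma'(|m_0|)^2=\big[\sigma(|m_0|)(1-\sigma(|m_0|))\big]^2$. When the optimal weights classify $\bm{x}_i$ correctly we have $\operatorname{sign}(m_0)=y_i$, hence $|m_0|=y_i\,\bm{\mu}^\top\bm{x}_i$ and $\sigma(|m_0|)=\sigma(y_i\,\bm{\mu}^\top\bm{x}_i)$ is precisely the probability $p^\ast$ of correctly classifying the example under the optimal weights. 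This yields a bound of the form $\mathbb{V}[\text{confidence}]\lesssim s\|\bm{x}_i\|^2\,(p^\ast)^2(1-p^\ast)^2$, i.e.\ a bound controlled by a factor involving the optimal-weight correct-classification probability.

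I expect the main obstacle to be exactly this last bridge: the Poincar\'e step produces an expectation over the \emph{random} weights, whereas the statement is phrased in terms of the \emph{fixed} optimal weights, so one must either accept the small-$s$ delta-method approximation or, for a fully rigorous inequality, bound $\mathbb{E}[\sigma'(|Z|)^2]$ directly (for example via $\sigma'(|Z|)\le\sigma'(\min(|Z|,|m_0|))$ together with a sub-Gaussian tail bound on $Z-m_0$). A secondary subtlety is handling the case $\operatorname{sign}(m_0)\neq y_i$, where $|m_0|$ no longer equals the signed margin $y_i\,\bm{\mu}^\top\bm{x}_i$ and the factor must be re-expressed through $1-p^\ast$ instead of $p^\ast$.
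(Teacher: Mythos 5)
Your proposal is correct (to the same standard of rigor as the paper's own argument) but reaches the bound by a genuinely different route. The paper linearizes from the outset: it Taylor-expands $p(y_i\mid\bm{x}_i,W)\simeq p(y_i\mid\bm{x}_i,\bm{w})+g_i(\bm{w})^\top(W-\bm{w})$ with $g_i(\bm{w})=p(1-p)\bm{x}_i$, reads off the variance of the linearization as $g_i(\bm{w})^\top F^{-1}g_i(\bm{w})$ under a Gaussian (Laplace-type) posterior $\mathcal{N}(\bm{w},F^{-1})$, and then handles the maximum over the two classes by the inequality $\mathbb{V}[\max_y p(y\mid\bm{x}_i,W)]\le\mathbb{V}[p(y_i\mid\bm{x}_i,W)]+\mathbb{V}[p(-y_i\mid\bm{x}_i,W)]$, arriving at $\approx 2\,p^2(1-p)^2\,\bm{x}_i^\top F^{-1}\bm{x}_i$. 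You instead collapse the confidence exactly to $\sigma(|Z|)$ with $Z=\bm{w}^\top\bm{x}_i$ a scalar Gaussian, apply the Gaussian Poincar\'e inequality to get the rigorous bound $\mathbb{V}[\sigma(|Z|)]\le s\|\bm{x}_i\|^2\,\mathbb{E}[\sigma'(|Z|)^2]$, and only then invoke a delta-method step to replace $\mathbb{E}[\sigma'(|Z|)^2]$ by $\big[p^*(1-p^*)\big]^2$. Your version buys two things: the only approximation is quarantined in the final step (the paper's is pervasive, since its "variance" is the variance of a linearization), and treating the max exactly via $\sigma(|\cdot|)$ avoids the sum-of-variances step, saving a factor of $2$ and sidestepping any worry about whether that inequality holds (it does, but the paper does not justify it). What the paper's version buys is generality in the covariance: by phrasing the result through the posterior covariance $F^{-1}$ it connects to the cited Bayesian active-learning estimates (Schein and Ungar; Chang et al.), whereas your Poincar\'e bound as written is specialized to the isotropic prior $s\bm{I}$ of the lemma statement --- though it extends verbatim to $\mathcal{N}(\bm{\mu},\Sigma)$ via $\mathbb{V}[g]\le\mathbb{E}[\nabla g^\top\Sigma\nabla g]$. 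Finally, the weak bridge you flag --- passing from an expectation over random weights to a quantity evaluated at the fixed optimal weights --- is exactly the same heuristic the paper relies on, so identifying it and sketching a sub-Gaussian repair puts your write-up, if anything, on firmer footing than the original.
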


\begin{proof}
We first compute the variance of model predictions $p(y_i \mid \bm{x}_i, W)$ for a given datapoint $(\bm{x}_i, y_i)$. Following previous work~\citep{schein2007active, chang2017active}, the variance of predictions over these checkpoints can be estimated as follows:

Taking two terms in Taylor expansion for model predictions we have $p(y_i \mid \bm{x}_i, W) \simeq p(y_i \mid \bm{x}_i, \bm{w}) + g_i(\bm{w})^\top (W - \bm{w})$ where $W$ and $\bm{w}$ are current and the expected estimate of the parameters and $g_i(\bm{w}) = p(y_i \mid \bm{x}_i, \bm{w}) (1 - p(y_i \mid \bm{x}_i, \bm{w}))\bm{x}_i $ is the gradient vector. Now we can write the variance with respect to the model prior as: 
$$ \mathbb{V}\left( p(y_i \mid \bm{x}_i, W) \right ) \simeq \mathbb{V}\left( g_i(\bm{w})^\top (W - \bm{w}) \right ) =  g_i(\bm{w})^\top F^{-1}  g_i(\bm{w})$$ where $F$ is the variance of posterior distribution $p(W \mid X, Y) \sim \mathcal{N}(W \mid \bm{w}, F^{-1})$. 
This suggests that the variance of probability of correctly classifying $\bm{x}_i$ is proportional to $p(y_i \mid \bm{x}_i, \bm{w})^2 (1 - p(y_i \mid \bm{x}_i, \bm{w}))^2$. Now we can bound the variance of maximum class probability or confidence as below:
\begin{align*}
  \mathbb{V}\left( \max_{y \in \{-1, 1\}} p(y \mid \bm{x}_i, W) \right ) & \leq  \mathbb{V}\left( p(y_i \mid \bm{x}_i, W) \right ) +  \mathbb{V}\left( p(- y_i \mid \bm{x}_i, W) \right ) \\
  & \approx 2 p(y_i \mid \bm{x}_i, \bm{w})^2 (1 - p(y_i \mid \bm{x}_i, \bm{w}))^2 \bm{x}_i^\top F^{-1} \bm{x}_i 
\end{align*}
\end{proof}
We showed that if the probability of correctly classifying an example given the final estimate of model parameters is close to one, the variance of model predictions following a Gaussian prior gets close to zero, we expect a similar behaviour for the variance of confidence under samples of this distribution. 

\section{Extension of Empirical Evaluation}

\subsection{Full Hyper-Parameters }
\label{app:baseline_hyperparams}

We document full hyper-parameter settings for our method (\sptd) as well as all baseline approaches in Table~\ref{tab:hyperpar}.

\begin{table*}[ht]
    \centering 
        \caption{\textbf{Hyper-parameters used for all algorithms for classification.}}
    \label{tab:hyperpar}
     \begin{tabular}{@{}c c c@{}} 
     \toprule
     Dataset & SC Algorithm & Hyper-Parameters \\ 
     \midrule
     \multirow{4}{*}{CIFAR-10}      & Softmax Response (\sr) & N/A \\ 
                      & Self-Adaptive Training (\sat)  & $P=100$\\ 
                      & Deep Ensembles (\de)  & $E=10$\\ 
                      & Selective Prediction Training Dynamics (\sptd) & $T = 1600,\ k=2$ \\ 
     \midrule
     \multirow{4}{*}{CIFAR-100}      & Softmax Response (\sr) & N/A \\ 
                      & Self-Adaptive Training (\sat)  & $P=100$\\ 
                      & Deep Ensembles (\de)  & $E=10$\\ 
                      & Selective Prediction Training Dynamics (\sptd) & $T = 1600,\ k=2$ \\ 
     \midrule
     \multirow{4}{*}{Food101}      & Softmax Response (\sr) & N/A \\ 
                      & Self-Adaptive Training (\sat)  & $P=100$\\ 
                      & Deep Ensembles (\de)  & $E=10$\\ 
                      & Selective Prediction Training Dynamics (\sptd) & $T = 2200,\ k=3$ \\
    \midrule
    \multirow{4}{*}{StanfordCars}      & Softmax Response (\sr) & N/A \\ 
                      & Self-Adaptive Training (\sat)  & $P=100$\\ 
                      & Deep Ensembles (\de)  & $E=10$\\ 
                      & Selective Prediction Training Dynamics (\sptd) & $T = 800,\ k=5$ \\
     \bottomrule
    \end{tabular}
\end{table*}

\subsection{Additional Selective Prediction Results}
\label{app:add_exp}

\subsubsection{Extended Synthetic Experiments}

We extend the experiment from Figure~\ref{fig:gauss} to all tested SC methods in Figure~\ref{fig:gauss_ext}. We also provide an extended result using Bayesian Linear Regression in Figure~\ref{fig:blr}.

\begin{figure*}[t]
  \centering
  \includegraphics[width=0.95\linewidth]{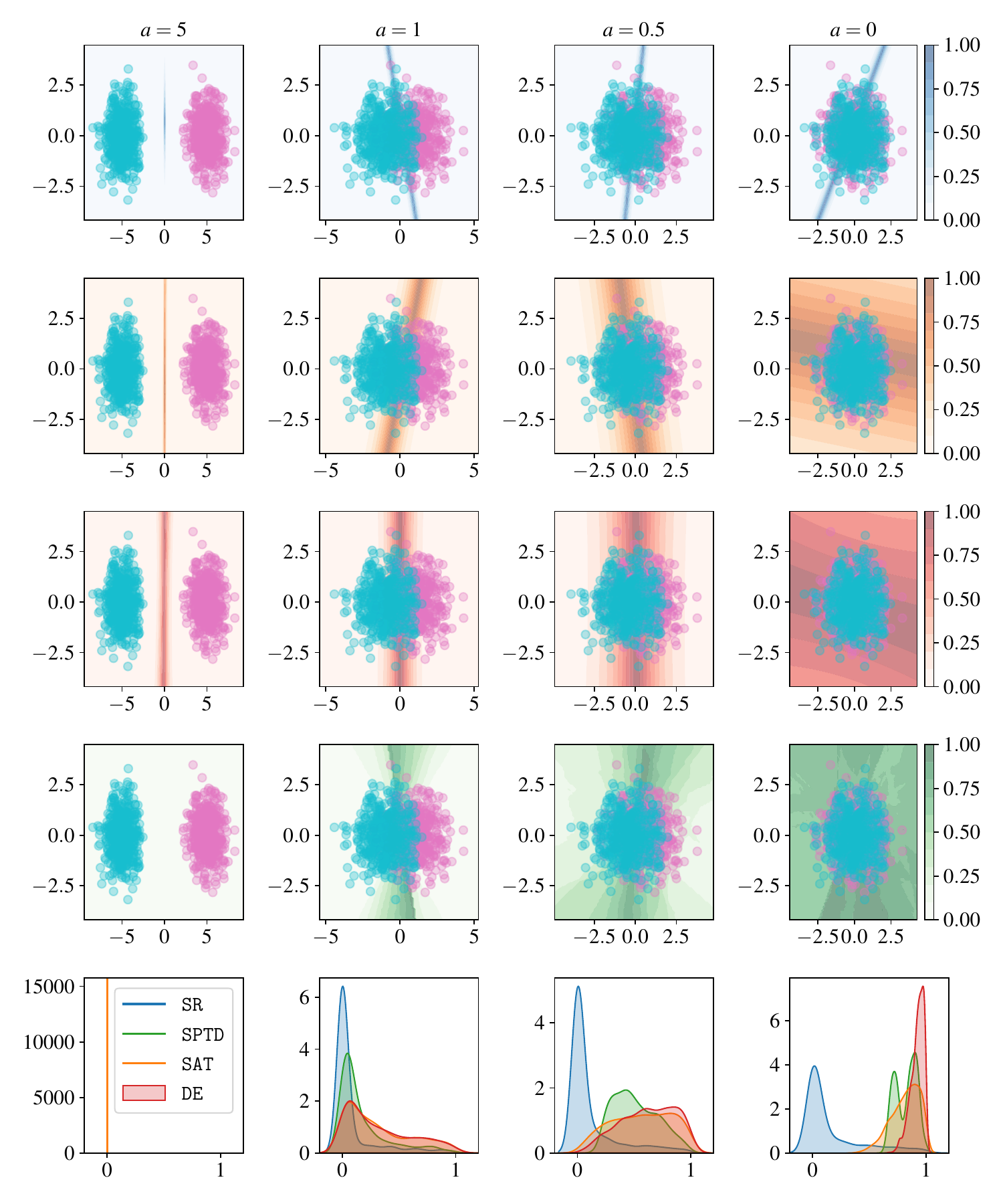}
\caption{\textbf{Extended Gaussian experiment.}  The first row corresponds to the anomaly scoring result of \sr, the second to the result of \sat, the third to the result of \de, and the fourth to the result of \sptd. The bottom row shows the score distribution for each method over the data points. We see that all methods reliably improve over the \sr baseline. At the same time, we notice that \sat and \de still assign higher confidence away from the data due to limited use of decision boundary oscillations. \sptd addresses this limitation and assigns more uniform uncertainty over the full data space.}
\label{fig:gauss_ext}
\end{figure*}

\begin{figure*}[t]
  \centering
  \includegraphics[width=\linewidth]{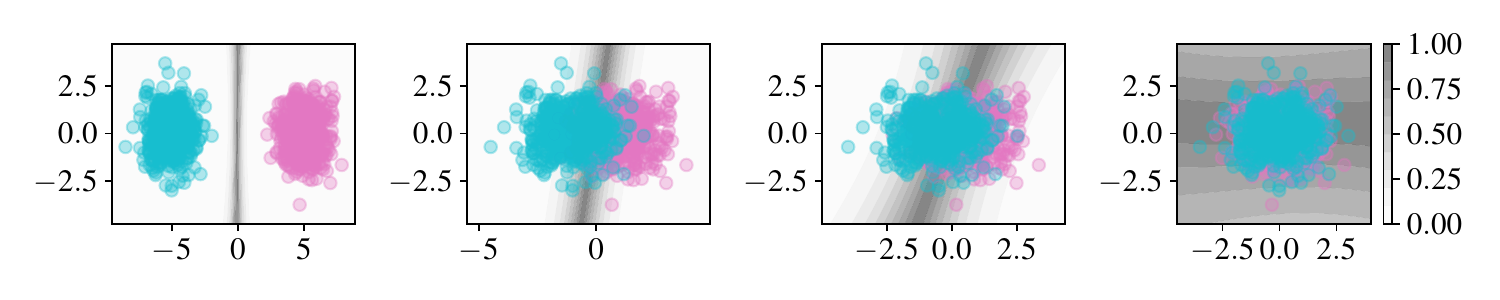}
\caption{\textbf{Bayesian linear regression experiment on Gaussian data.} Results comparable to \de.}
\label{fig:blr}
\end{figure*}

\subsubsection{CIFAR-100 Results With ResNet-50}

We further provide a full set of results using the larger ResNet-50 architecture on CIFAR-100 in Figure~\ref{tab:cifar100_res50}.

\begin{table*}[t]
    \centering {
            \caption{\textbf{Selective accuracy achieved across coverage levels for CIFAR-100 with ResNet-50}.}
    \vspace{5pt}
    \label{tab:cifar100_res50}

    \begin{tabular}{cccccc}
\toprule
Cov. &       \sr &       \satersr &      \de &      \sptd &        \sptdde \\
\midrule
  100 &  \underline{\bfseries 77.0 (±0.0)} &  \underline{\bfseries 77.0 (±0.0)} &  \bfseries 77.0 (±0.0) &  \underline{\bfseries 77.0 (±0.0)} &  \bfseries 77.0 (±0.0) \\
90 & 79.2 (± 0.1) & 79.9 (± 0.1) & 81.2 (± 0.0) & \underline{81.4 (± 0.1)} & \bfseries 82.1 (± 0.1) \\
80 & 83.1 (± 0.0) & 83.9 (± 0.0) & 85.7 (± 0.1) & \underline{85.6 (± 0.1)} & \bfseries 86.0 (± 0.2) \\
70 & 87.4 (± 0.1) & 88.2 (± 0.1) & 89.6 (± 0.1) & \underline{\textbf{89.7 (± 0.0)}} & \bfseries 89.8 (± 0.1) \\
60 & 90.5 (± 0.0) & 90.8 (± 0.2) & \bfseries{90.7 (± 0.2)} & \underline{90.6 (± 0.0)} & \bfseries 90.9 (± 0.1) \\
50 & 93.4 (± 0.1) & 93.8 (± 0.0) & 95.3 (± 0.0) & \underline{95.1 (± 0.0)} & \bfseries 95.4 (± 0.0) \\
40 & 95.4 (± 0.0) & 95.5 (± 0.1) & \textbf{97.1 (± 0.1)} & \underline{\textbf{97.2 (± 0.1)}} & \bfseries 97.2 (± 0.0) \\
30 & 97.4 (± 0.2) & 97.7 (± 0.0) & \textbf{98.6 (± 0.1)} & \underline{\textbf{98.6 (± 0.1)}} & \bfseries 98.7 (± 0.0) \\
20 & 97.9 (± 0.1) & 98.4 (± 0.1) & 99.0 (± 0.0) & \underline{99.2 (± 0.1)} & \bfseries 99.2 (± 0.1) \\
10 & 98.1 (± 0.0) & 98.8 (± 0.1) & 99.2 (± 0.1) & \underline{\textbf{99.4 (± 0.1)}} & \bfseries 99.6 (± 0.1) \\
\bottomrule
\end{tabular}
}
\end{table*}

\subsubsection{Applying \sptd on Top of \sat}
\label{sec:sptd_on_sat}

Our main set of results suggest that applying \sptd on top of \de further improves performance. The same effect holds when applying \sptd on top of non-ensemble-based methods such as \sat. We document this result in Figure~\ref{fig:sat_sptd}. 

\begin{figure*}[t]
  \centering
  \includegraphics[width=\linewidth]{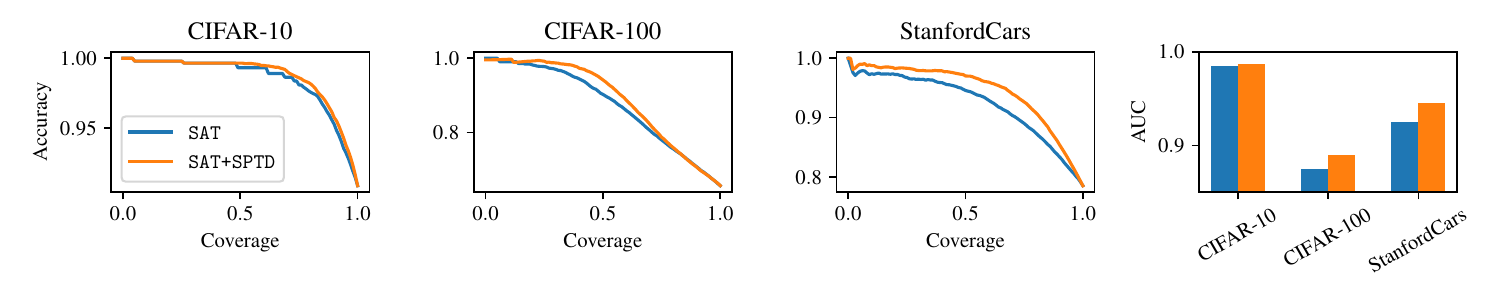}
\caption{\textbf{Applying \sptd on top of \sat.} Similar as with \de, we observe that the application of \sptd improves performance.}
\label{fig:sat_sptd}
\end{figure*}

\subsubsection{Ablation on $k$}

We provide a comprehensive ablation on the weighting parameter $k$ in Figures~\ref{fig:weighting} and~\ref{fig:k_ext}.

\begin{figure*}[t]
  \centering
  \includegraphics[width=\linewidth]{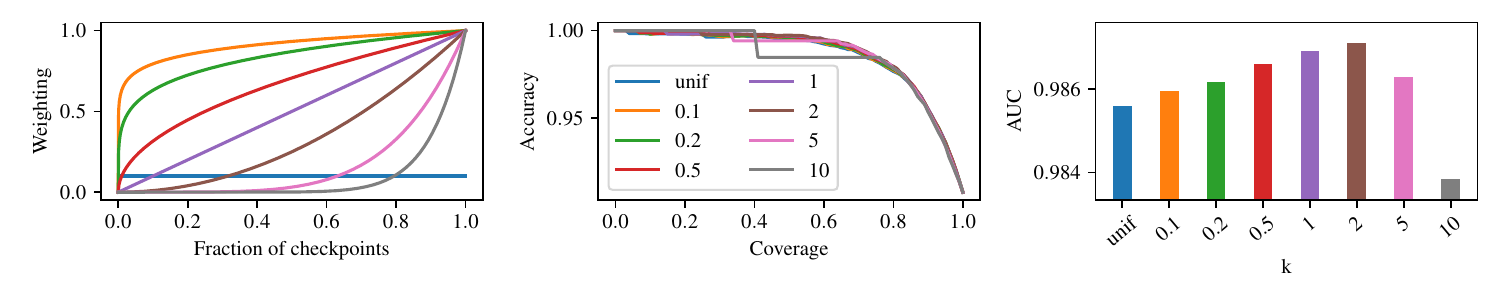}
\caption{\textbf{Extended ablation results on $k$ on CIFAR-10.} We now also consider $k \in (0,1]$ as well as a uniform weighting assigning the same weight to all checkpoints. We confirm that a convex weighting yields best performance.}
\label{fig:k_ext}
\end{figure*}

\subsubsection{Comparison With Logit-Variance Approaches}

We showcase the effectiveness of \sptd against \logitvar~\citep{swayamdipta2020dataset}, an approach that also computes predictions of intermediate models but instead computes the variance of the correct prediction. We adapt this method to our selective prediction approach (for which true labels are not available) by computing the variance over the maximum predicted logit instead of the true logit. In Figure~\ref{fig:logitvar}, we see that the weighting of intermediate checkpoints introduced by \sptd leads to stronger performance over the \logitvar baseline approach. 

\begin{figure*}[t]
  \centering
  \includegraphics[width=\linewidth]{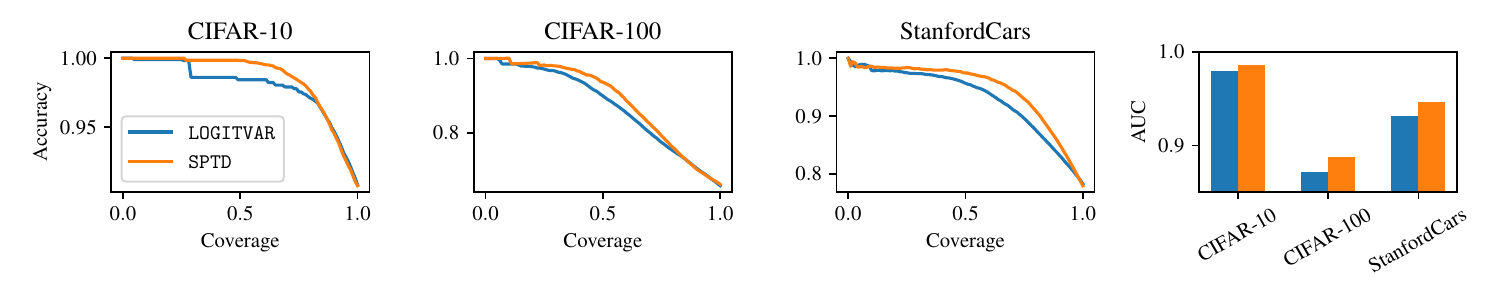}
\caption{\textbf{Comparison of \logitvar vs \sptd.} We observe that \sptd, which incorporates weighting of intermediate checkpoints using $v_t$, outperforms \logitvar.
}
\label{fig:logitvar}
\end{figure*}

\subsubsection{Estimating $\tau$ on Validation VS Test Data}

Consistent with prior works \citep{geifman2017selective, liu2019deep, huang2020self, feng2023towards}, we estimate $\tau$ directly on the test set. However, a realistically deployable approach has to compute thresholds based on a validation set for which labels are available. In the case of selective classification, the training, validation, and test sets follow the i.i.d. assumption, which means that an approach that sets the threshold based on a validation set should work performantly on a test set, too. Under consistent distributional assumptions, estimating thresholds on a validation set functions as an unbiased estimator of accuracy/coverage tradeoffs on the test set. By the same virtue, setting thresholds directly on the test set and observing the SC performance on that test set should be indicative for additional test samples beyond the actual provided test set. It is important to remark that the validation set should only be used for setting the thresholds and not for model selection / early stopping which would indeed cause a potential divergence between SC performance on the validation and test sets. Further note that violations of the i.i.d assumption can lead to degraded performance due to mismatches in attainable coverage as explored in~\cite{bar2023window}.

To confirm this intuition, we present an experiment in Figure~\ref{fig:train_test} and Figure~\ref{fig:train_test_sat} where we select 50\% of the samples from the test set as our validation set (and maintain the other 50\% of samples as our new test set). We first generate 5 distinct such validation-test splits, set the thresholds for $\tau$ based on the validation set, and then evaluate selective classification performance on the test set by using the thresholds derived from the validation set. We compare these results with our main approach which sets the thresholds based on the test set directly (ignoring the validation set). We provide an additional experiment where we partition the validation set from the training set in Figure~\ref{fig:train_test_nntd_train}. We see that the results are statistically indistinguishable from each other, confirming that this evaluation practice is valid for the selective classification setup we consider. 

\begin{figure*}[t]
  \centering
  \includegraphics[width=\linewidth]{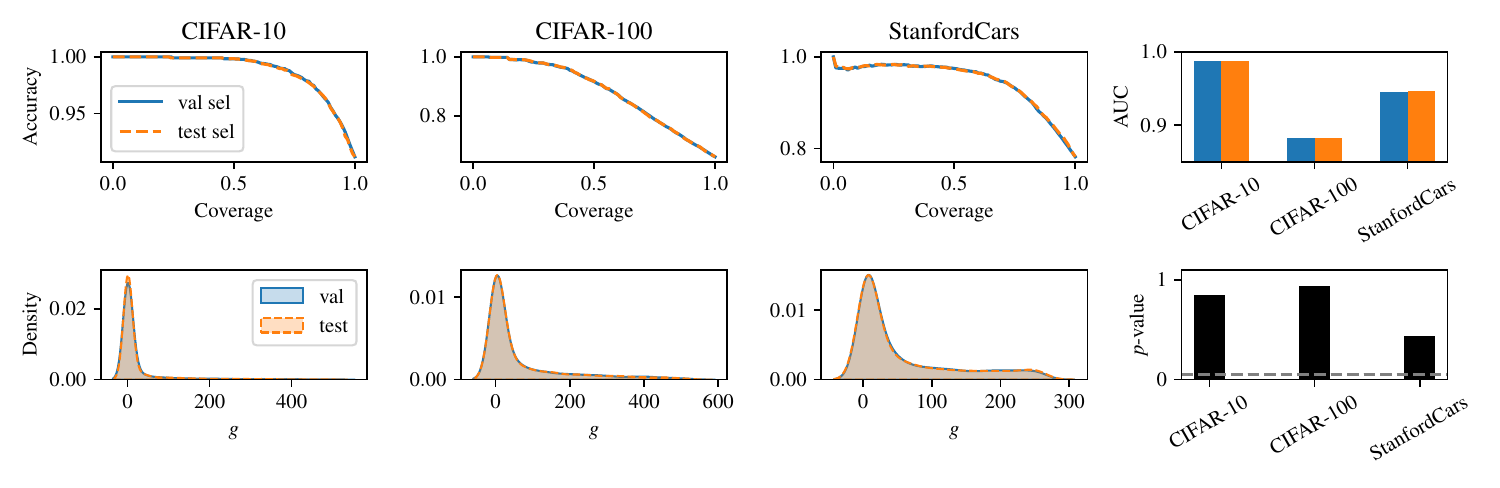}
\caption{\textbf{\sptd accuracy/coverage trade-offs and score distributions on test data obtained by computing $\tau$ on a validation set or directly on the test set.} The first row shows the obtained accuracy/coverage trade-offs with the respective AUC scores. In the second row, we show the score distribution for both the picked validation and test sets, along with $p$-values from a KS-test to determine the statistical closeness of the distributions. Overall, we observe that both methods are statistically indistinguishable from each other. 
}
\label{fig:train_test}
\end{figure*}

\begin{figure*}[t]
  \centering
  \includegraphics[width=\linewidth]{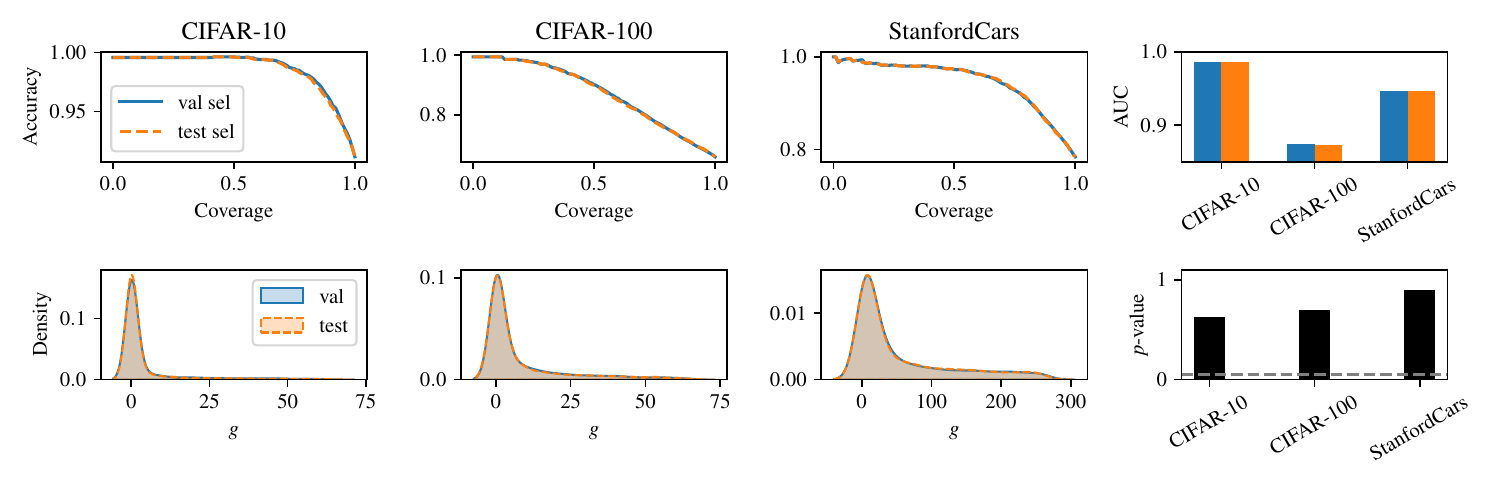}
\caption{\textbf{\sat accuracy/coverage trade-offs and score distributions on test data obtained by computing $\tau$ on a validation set or directly on the test set.} Same as Figure~\ref{fig:train_test} but with \sat.
}
\label{fig:train_test_sat}
\end{figure*}

\begin{figure*}[t]
  \centering
  \includegraphics[width=\linewidth]{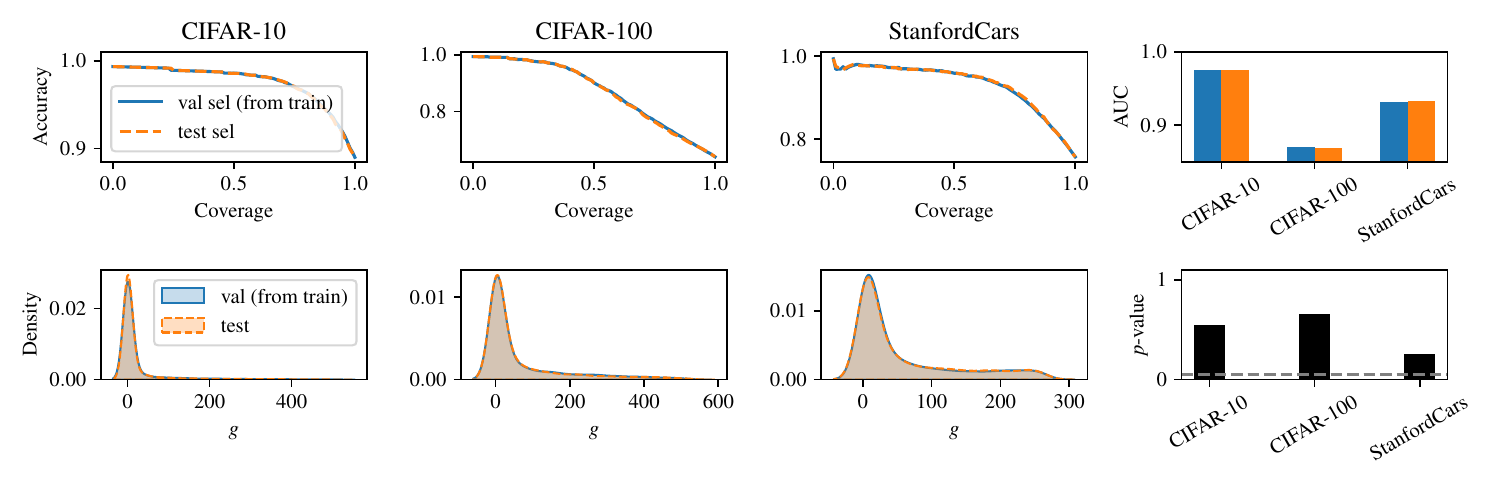}
\caption{\textbf{\sptd accuracy/coverage trade-offs and score distributions on test data obtained by computing $\tau$ on a validation set or directly on the test set.} Same as Figure~\ref{fig:train_test} but with the validation set is taken from the original training set.
}
\label{fig:train_test_nntd_train}
\end{figure*}

\begin{figure*}[t]
\centering
  \includegraphics[width=\linewidth]{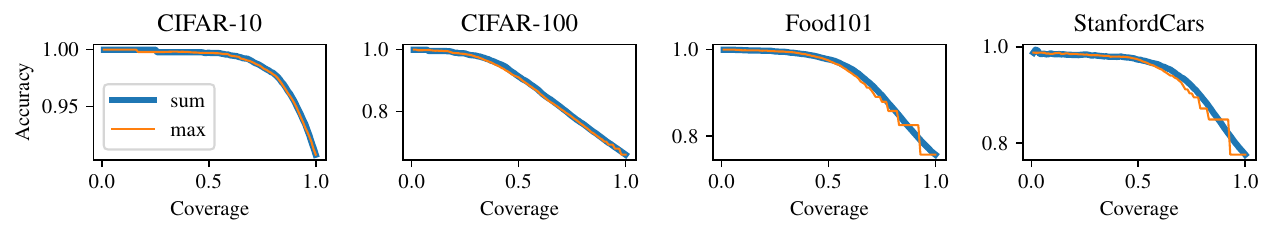}
\caption{\textbf{Comparing \smax and \ssum performance}. It is clear that \ssum effectively denoises \smax.}
\label{fig:sum_v_max}
\end{figure*}

\subsubsection{Comparing \smax and \ssum}
\label{sec:max_v_sum}

As per our theoretical framework and intuition provided in Section~\ref{sec:method}, the sum score~\ssum should offer the most competitive selective classification performance. We confirm this finding in Figure~\ref{fig:sum_v_max} where we plot the accuracy/coverage curves across all datasets for both \smax and \ssum. Overall, we find that the sum score~\ssum consistently outperforms the more noisy maximum score~\smax.

\begin{figure*}[t]
\vspace{-5pt}
\begin{subfigure}{.22 \linewidth}
  \centering
  \includegraphics[width=\linewidth]{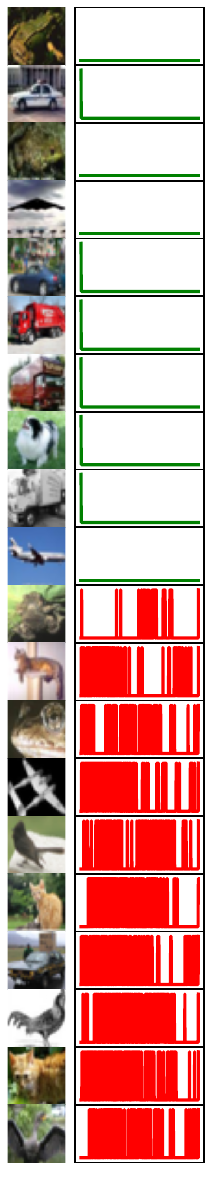}
  \caption{CIFAR-10}
\end{subfigure}
\hfill
\begin{subfigure}{.22 \linewidth}
  \centering
  \includegraphics[width=\linewidth]{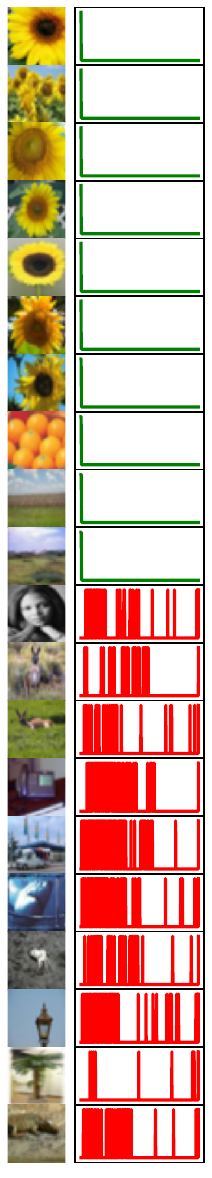}
  \caption{CIFAR-100}
\end{subfigure}
\hfill
\begin{subfigure}{.22 \linewidth}
  \centering
  \includegraphics[width=\linewidth]{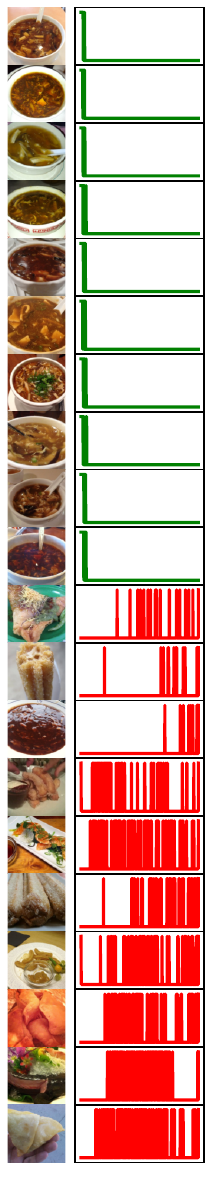}
  \caption{Food101}
\end{subfigure}
\hfill
\begin{subfigure}{.22 \linewidth}
  \centering
  \includegraphics[width=\linewidth]{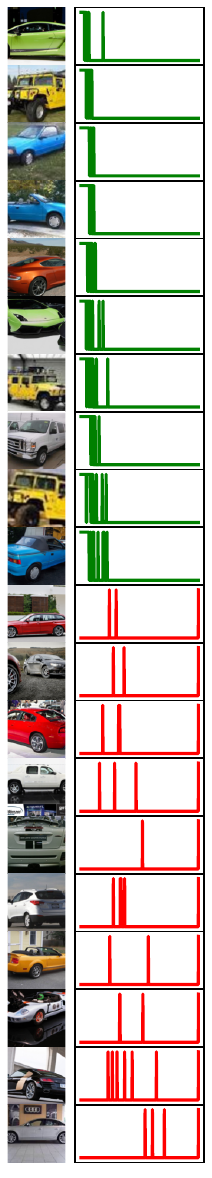}
  \caption{StanfordCars}
\end{subfigure}

\caption{\textbf{Additional individual examples across datasets.}
}
\label{fig:indiv_ex_ext}
\end{figure*} 

\subsection{The Importance of Accuracy Alignment}

Our results in Table~\ref{tab:target_cov} rely on accuracy alignment: We explicitly make sure to compare all methods on an equal footing by disentangling selective prediction performance from gains in overall utility. This is done by early stopping model training when the accuracy of the worst performing model is reached.

We expand on the important point that many previous approaches conflate both (i) generalization performance and (ii) selective prediction performance into a single score: the area under the accuracy/coverage curve. This metric can be maximized either by improving generalization performance (choosing different architectures or model classes) or by actually improving the ranking of points for selective prediction (accepting correct points first and incorrect ones last). As raised by a variety of recent works \cite{geifman2018bias, rabanser2023training, cattelan2023improving}, it is impossible and problematic to truly assess whether a method performs better at selective prediction (i.e., determining the correct acceptance ordering) without normalizing for this inherent difference yielded as a side effect by various SC methods. In other words, an SC method with lower base accuracy (smaller correct set) can still outperform another SC method with higher accuracy (larger correct set) in terms of the selective acceptance ordering (an example of which is given in Table 3 of \cite{liu2019deep}). Accuracy normalization allows us to eliminate these confounding effects between full-coverage utility and selective prediction performance by identifying which models are better at ranking correct points first and incorrect ones last. This is of particular importance when comparing selective prediction methods which change the training pipeline in different ways, as is done in the methods presented in Table~\ref{tab:target_cov}.

However, when just comparing \sptd to one other method, we do not need to worry about accuracy normalization. Showcasing this, we run \sptd on top of an unnormalized \satersr run and provide these experiments in Figure~\ref{fig:sat_sptd}. We see that the application of \sptd on top of \satersr allows us to further boost performance (similar to the results where we apply \sptd on top of \de in Table~\ref{tab:target_cov}). So to conclude, experimentally, when using the best model, we see that \sptd still performs better at selective prediction than the relevant baseline for that training pipeline. We wish to reiterate that this issue of accuracy normalization highlights another merit of \sptd, which is that it can easily be applied on top of any training pipeline (including those that lead to the best model) and allows easy comparison to the selective classification method that training pipeline was intended to be deployed with.

\subsection{Evaluation using other performance metrics}

We further provide results of summary performance metrics across datasets in Table~\ref{tab:sc_evals}:

\begin{itemize}
    \item The area under the accuracy-coverage curve (\texttt{AUACC}) as discussed in \citet{geifman2018bias}.
    \item The area under the receiver operating characteristic (\texttt{AUROC}) as suggested by \citet{galil2023can}.
    \item The accuracy normalized selective classification score (\texttt{ANSC}) from~\citet{geifman2018bias} and ~\citet{rabanser2023training}.
\end{itemize}

\begin{table}[ht]
\centering
\caption{\textbf{Evaluation of SC approaches using various evaluation metrics}. }
\label{tab:sc_evals}
\begin{tabular}{ccccc}
\toprule
Dataset & Method & $1-\texttt{AUACC}$ & \texttt{ANSC} & \texttt{AUROC} \\
\midrule
\multirow{5}{*}{CIFAR10} & \texttt{SR} & 0.053 $\pm$ 0.002 & 0.007 $\pm$ 0.000 & 0.918 $\pm$ 0.002 \\
& \texttt{SPTD} & 0.048 $\pm$ 0.001 & 0.004 $\pm$ 0.000 & 0.938 $\pm$ 0.002 \\
& \texttt{DE} & 0.046 $\pm$ 0.002 & 0.004 $\pm$ 0.000 & 0.939 $\pm$ 0.003 \\
& \texttt{SAT} & 0.054 $\pm$ 0.002 & 0.006 $\pm$ 0.000 & 0.924 $\pm$ 0.005 \\
& \texttt{DG} & 0.054 $\pm$ 0.001 & 0.006 $\pm$ 0.000 & 0.922 $\pm$ 0.005 \\
\midrule
\multirow{5}{*}{CIFAR100} & \texttt{SR} & 0.181 $\pm$ 0.001 & 0.041 $\pm$ 0.001 & 0.865 $\pm$ 0.003 \\
& \texttt{SPTD} & 0.174 $\pm$ 0.002 & 0.037 $\pm$ 0.000 & 0.872 $\pm$ 0.002 \\
& \texttt{DE} & 0.159 $\pm$ 0.001 & 0.030 $\pm$ 0.001 & 0.880 $\pm$ 0.003 \\
& \texttt{SAT} & 0.180 $\pm$ 0.001 & 0.041 $\pm$ 0.001 & 0.866 $\pm$ 0.003 \\
& \texttt{DG} & 0.182 $\pm$ 0.001 & 0.041 $\pm$ 0.001 & 0.867 $\pm$ 0.002 \\
\midrule
\multirow{5}{*}{GTSRB} & \texttt{SR} & 0.020 $\pm$ 0.002 & 0.001 $\pm$ 0.000 & 0.986 $\pm$ 0.003 \\
& \texttt{SPTD} & 0.019 $\pm$ 0.002 & 0.001 $\pm$ 0.000 & 0.986 $\pm$ 0.005 \\
& \texttt{DE} & 0.015 $\pm$ 0.001 & 0.001 $\pm$ 0.000 & 0.986 $\pm$ 0.002 \\
& \texttt{SAT} & 0.027 $\pm$ 0.001 & 0.001 $\pm$ 0.000 & 0.984 $\pm$ 0.002 \\
& \texttt{DG} & 0.019 $\pm$ 0.003 & 0.001 $\pm$ 0.000 & 0.986 $\pm$ 0.002 \\
\midrule
\multirow{5}{*}{SVHN} & \texttt{SR} & 0.027 $\pm$ 0.000 & 0.006 $\pm$ 0.001 & 0.895 $\pm$ 0.004 \\
& \texttt{SPTD} & 0.025 $\pm$ 0.003 & 0.003 $\pm$ 0.001 & 0.932 $\pm$ 0.005 \\
& \texttt{DE} & 0.021 $\pm$ 0.001 & 0.005 $\pm$ 0.000 & 0.912 $\pm$ 0.003 \\
& \texttt{SAT} & 0.028 $\pm$ 0.001 & 0.006 $\pm$ 0.000 & 0.895 $\pm$ 0.002 \\
& \texttt{DG} & 0.026 $\pm$ 0.001 & 0.007 $\pm$ 0.000 & 0.896 $\pm$ 0.006 \\
\bottomrule
\end{tabular}
\end{table}

\newlyadded{

\subsection{Evaluation of more competing approaches}

We further compare our method with two more contemporary selective prediction approaches:
\begin{itemize}
    \item \aucoc \citep{sangalli2024expert}: This work uses a custom cost function for multi-class classification that accounts for the trade-off between a neural network’s accuracy and the amount of data that requires manual inspection from a domain expert.
    \item \cclsc \citep{wu2024confidence}: This work proposes optimizing feature layers to reduce intra-class variance via contrastive learning.
\end{itemize}
Across both methods, we find that they do not outperform ensemble-based methods like \de and hence also do not outperform \sptd. See Table~\ref{tab:target_cov_ext} for detailed results.
}

\begin{table}[h!]
\centering
\caption{\newlyadded{\textbf{Selective accuracy achieved across coverage levels for \aucoc and \cclsc}. Similar as Table~\ref{tab:target_cov}. Neither \aucoc nor \cclsc is able to outperform \de or \sptd. \textbf{Bold} numbers are best results at a given coverage level across all methods and \underline{underlined} numbers are best results for methods relying on a single training run only.}}
\label{tab:target_cov_ext}
\newlyadded{
\begin{tabular}{cccccc}
\toprule
& \textbf{Coverage} & \textbf{AUCOC} & \textbf{CCL-SC} & \textbf{SPTD} & \textbf{DE} \\
\midrule
\multirow{10}{*}{\rotatebox[origin=c]{90}{\textit{CIFAR-10}}} & 
100 & \underline{\textbf{92.9}} ($\pm$0.0) & \underline{\textbf{92.9}} ($\pm$0.0) & \underline{\textbf{92.9}} ($\pm$0.0) & \textbf{92.9} ($\pm$0.0) \\
& 90  & 96.0 ($\pm$0.1) & 95.9 ($\pm$0.2) & \underline{96.5} ($\pm$0.0) & \textbf{96.8} ($\pm$0.1) \\
& 80  & 98.1 ($\pm$0.2) & 98.0 ($\pm$0.3) & \underline{98.4} ($\pm$0.1) & \textbf{98.7} ($\pm$0.0) \\
& 70  & 99.0 ($\pm$0.3) & 98.5 ($\pm$0.2) & \underline{99.2} ($\pm$0.1) & \textbf{99.4} ($\pm$0.1) \\
& 60  & 99.3 ($\pm$0.1) & 99.1 ($\pm$0.2) & \underline{\textbf{99.6}} ($\pm$0.2) & \textbf{99.6} ($\pm$0.1) \\
& 50  & 99.4 ($\pm$0.2) & 99.0 ($\pm$0.3) & \underline{\textbf{99.8}} ($\pm$0.0) & 99.7 ($\pm$0.0) \\
& 40  & 99.5 ($\pm$0.1) & 99.4 ($\pm$0.2) & \underline{\textbf{99.8}} ($\pm$0.1) & \textbf{99.8} ($\pm$0.0) \\
& 30  & 99.5 ($\pm$0.2) & 99.2 ($\pm$0.3) & \underline{\textbf{99.8}} ($\pm$0.1) & \textbf{99.8} ($\pm$0.0) \\
& 20  & 99.6 ($\pm$0.1) & 99.4 ($\pm$0.2) & \underline{\textbf{100.0}} ($\pm$0.0) & 99.8 ($\pm$0.0) \\
& 10  & 99.7 ($\pm$0.0) & 99.4 ($\pm$0.1) & \underline{\textbf{100.0}} ($\pm$0.0) & 99.8 ($\pm$0.0) \\
\midrule
\multirow{10}{*}{\rotatebox[origin=c]{90}{\textit{CIFAR-100}}} &
100 & \underline{\textbf{75.1}} ($\pm$0.0) & \underline{\textbf{75.1}} ($\pm$0.0) & \underline{\textbf{75.1}} ($\pm$0.0) & \underline{\textbf{75.1}} ($\pm$0.0) \\
& 90  & 78.7 ($\pm$0.2) & 76.5 ($\pm$0.3) & \underline{\textbf{80.4}} ($\pm$0.0) & 80.2 ($\pm$0.0) \\
& 80  & 83.2 ($\pm$0.1) & 82.2 ($\pm$0.2) & \underline{\textbf{84.6}} ($\pm$0.1) & \textbf{84.7} ($\pm$0.1) \\
& 70  & 87.4 ($\pm$0.1) & 86.1 ($\pm$0.2) & \underline{\textbf{88.7}} ($\pm$0.0) & 88.6 ($\pm$0.1) \\
& 60  & 89.8 ($\pm$0.2) & 88.6 ($\pm$0.3) & \underline{90.1} ($\pm$0.0) & \textbf{90.2} ($\pm$0.2) \\
& 50  & 93.3 ($\pm$0.1) & 92.1 ($\pm$0.2) & \underline{94.6} ($\pm$0.0) & \textbf{94.8} ($\pm$0.0) \\
& 40  & 95.9 ($\pm$0.2) & 95.2 ($\pm$0.3) & \underline{\textbf{96.9}} ($\pm$0.1) & 96.8 ($\pm$0.1) \\
& 30  & 98.2 ($\pm$0.1) & 96.6 ($\pm$0.2) & \underline{\textbf{98.4}} ($\pm$0.1) & 98.4 ($\pm$0.1) \\
& 20  & 98.6 ($\pm$0.2) & 98.4 ($\pm$0.3) & \underline{98.8} ($\pm$0.2) & \textbf{99.0} ($\pm$0.0) \\
& 10  & 98.8 ($\pm$0.1) & 98.7 ($\pm$0.2) & \underline{\textbf{99.4}} ($\pm$0.1) & 99.2 ($\pm$0.1) \\
\bottomrule
\end{tabular}
}
\end{table}

\end{document}